\newcommand{\printfnsymbol}[1]{%
  \textsuperscript{\@fnsymbol{#1}}%
}
\newtheorem{assumption}{Assumption}
\DeclareMathOperator*{\argmin}{arg\,min}
\begin{document}

\title{\texttt{$\infty$-Brush} \includegraphics[height=11pt,width=11pt]{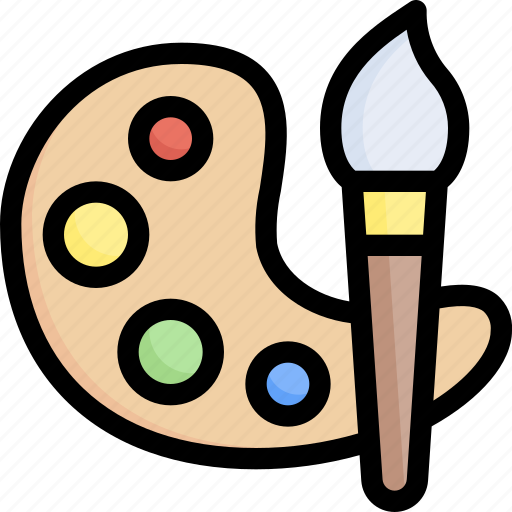}: Controllable Large Image Synthesis with Diffusion Models in Infinite Dimensions} 
\titlerunning{\texttt{$\infty$-Brush}}

\author{Minh-Quan Le\thanks{Equal contribution}\and
Alexandros Graikos\printfnsymbol{1}\and
Srikar Yellapragada\and \\Rajarsi Gupta \and Joel Saltz \and Dimitris Samaras}

\authorrunning{M.-Q.~Le et al.}

\institute{Stony Brook University\\
\email{\{mile,agraikos,myellapragad,samaras\}@cs.stonybrook.edu}}

\maketitle

\begin{abstract}
Synthesizing high-resolution images from intricate, domain-specific information remains a significant challenge in generative modeling, particularly for applications in large-image domains such as digital histopathology and remote sensing. Existing methods face critical limitations: conditional diffusion models in pixel or latent space cannot exceed the resolution on which they were trained without losing fidelity, and computational demands increase significantly for larger image sizes. Patch-based methods offer computational efficiency but fail to capture long-range spatial relationships due to their overreliance on local information. In this paper, we introduce a novel conditional diffusion model in infinite dimensions, \texttt{$\infty$-Brush} for controllable large image synthesis. We propose a cross-attention neural operator to enable conditioning in function space. Our model overcomes the constraints of traditional finite-dimensional diffusion models and patch-based methods, offering scalability and superior capability in preserving global image structures while maintaining fine details. To our best knowledge, \texttt{$\infty$-Brush} is the first conditional diffusion model in function space, that can controllably synthesize images at arbitrary resolutions of up to $4096\times4096$ pixels. The code is available at \href{https://github.com/cvlab-stonybrook/infinity-brush}{https://github.com/cvlab-stonybrook/infinity-brush}.

\keywords{Diffusion models \and Function space models \and Image synthesis}
\end{abstract}

\begin{figure}[!h]
    \centering
    \includegraphics[width=\linewidth]{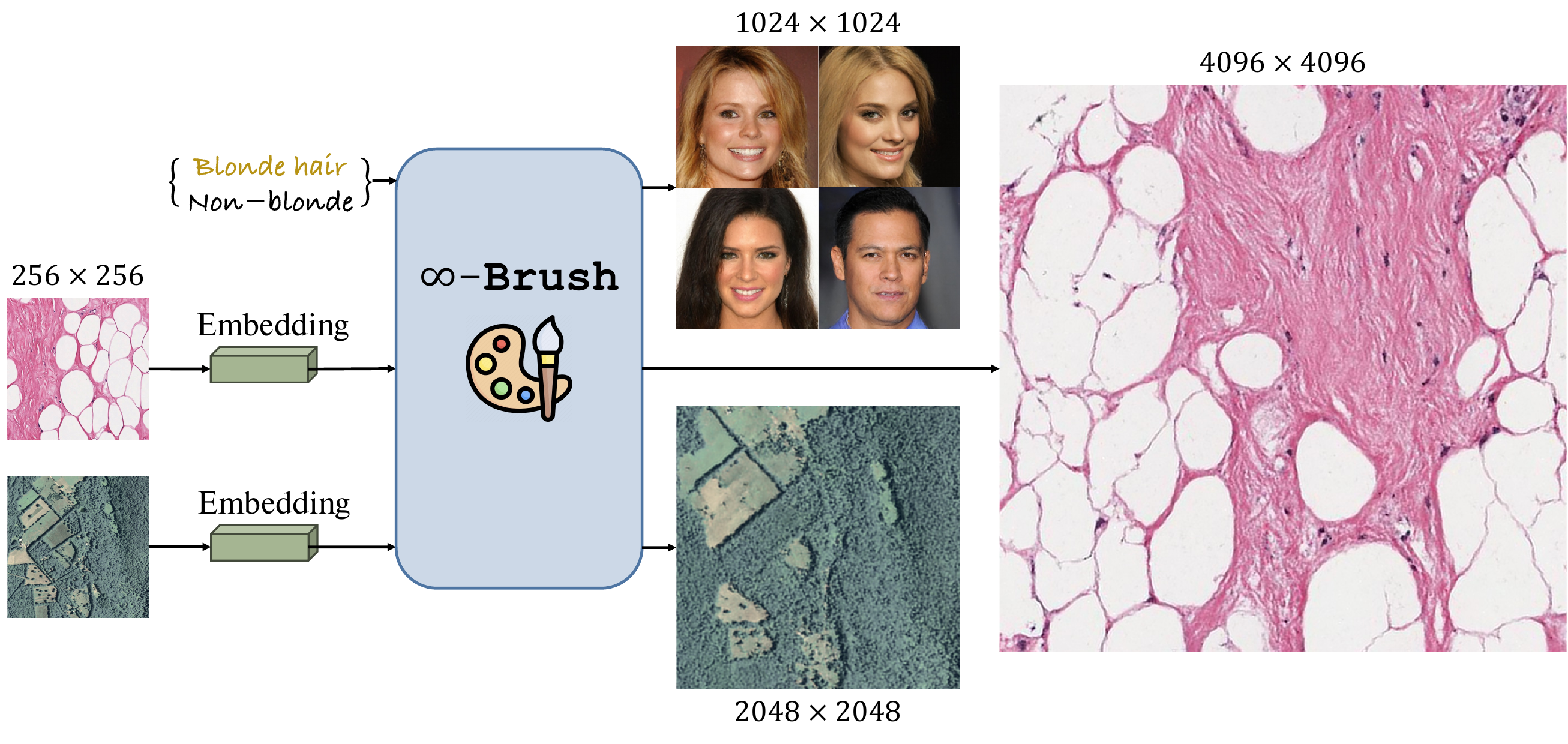}
    \caption{\texttt{$\infty$-Brush} is able to controllably generate images at arbitrary resolutions of up to $4096 \times 4096$, conditioned on any available auxiliary information about the images.}
    \label{fig:teaser}
\end{figure}

\section{Introduction}
\label{sec:intro}
Diffusion models are powerful generative models that have achieved remarkable success in synthesizing diverse and complex data, such as images and audio \cite{nichol2021improved, liu2023audioldm}. Despite their success, it is still difficult to generate high-resolution images, especially when it is necessary to condition them on intricate, domain-specific information. Practical histopathology and satellite imagery applications in medical diagnostics, environmental monitoring, and beyond require precise and controllable very large image synthesis -- well beyond $1024 \times 1024$ pixels, which is impractical with the current state-of-the-art (SoTA) models such as Stable Diffusion-XL (SDXL) \cite{podell2023sdxl}.

The SoTA methods for controllable large image generation still exhibit significant limitations. We can distinguish two broad categories; the first set of approaches directly employs conditional diffusion models in finite latent or pixel space and is inherently limited by its design to generate images only at the resolution on which it was trained. Examples are SDXL \cite{podell2023sdxl} and Matryoshka Diffusion \cite{gu2023matryoshka} which can produce images at a resolution of up to $1024 \times 1024$ pixels. While impressive, such methods cannot generate images at resolutions higher than those they were trained in, without a loss in quality or fidelity. Additionally, as the resolution increases, the computational resources required to train and run these models scale quadratically, making the process increasingly inefficient for larger image sizes. 

The second strategy, introduced by MultiDiffusion \cite{bar-tal2023multidiffusion} and adapted by Graikos \etal \cite{graikos2023learned}, involves a patch-based method that splits large image generation into smaller segments. This technique involves training "local" diffusion models on the patches from large images and performing large image synthesis using an outpainting algorithm. While this approach is computationally more efficient and produces sufficiently realistic larger images, it falls short of capturing long-range spatial dependencies (as discussed in the supplementary). This limitation stems from the heavy reliance on local information, as the generation of each patch is predominantly influenced by the local conditioning and not affected by the information of far away patches.

The previously mentioned methods operate in finite image or latent space and cannot significantly exceed the training image sizes during generation. This makes training the models directly on the entire large images a necessity, leading to insurmountable computational costs. Recently, Bond-Taylor \etal \cite{bond2023infty} have demonstrated that by representing images as functions in Hilbert space $\mathcal{H}$ they can synthesize arbitrarily-large images while training on fixed-size inputs. However, their models can not be conditioned for controllable image generation, which is necessary to efficiently utilize the model in downstream applications, such as data augmentation.





In this work, we propose a novel conditional diffusion model, \texttt{$\infty$-Brush}, for controllable large image synthesis in function space. Our model learns to synthesize images as continuous functions at arbitrarily sampled coordinates, enabling the generation of images at any desired resolution. 

To condition the infinite-dimensional diffusion models we propose a cross-attention neural operator in function space. This operator is necessary since naively trying to condition the diffusion process using existing cross-attention operations on fixed pixel grids is inadequate. Similar to how synthesizing images in finite dimensions cannot capture long-range and intricate details, applying cross-attention on a fixed grid will result in the loss of fine detail. In our experiments, we compare our proposed neural operator to conventional cross-attention and show how we can better capture fine details across all image scales. 

The $\infty$-Diff \cite{bond2023infty} model samples 25\% of the image pixels during training. Directly applying it to large images is infeasible due to memory constraints. Instead, we show that we can train our single, conditional model on much smaller subsets of pixels $(0.4\%)$ from each large image without loss in generation quality. This enables us to apply the infinite-dimensional diffusion model on large image datasets, where images can be up to $4096 \times 4096$ pixels.

In our experiments, we first demonstrate our infinite-dimensional conditioning mechanism, the cross-attention neural operator, by performing conditional image generation on CelebA-HQ \cite{karras2018progressive}. 
We then showcase large image generation, where we train models on histopathology and satellite image datasets and demonstrate how our method \texttt{$\infty$-Brush} outperforms patch-based generation \cite{graikos2023learned} in terms of maintaining global structure without sacrificing local fidelity. In these large image domains, the $4096 \times 4096$ resolution that we achieve is not attainable by any existing model \cite{graikos2023learned, bar-tal2023multidiffusion, podell2023sdxl}.
Figure~\ref{fig:teaser} illustrates that \texttt{$\infty$-Brush} is able to controllably synthesize images at arbitrary resolutions of up to $4096 \times 4096$. 



In summary, our contributions are as follows:
\begin{itemize}
    \item We propose a cross-attention neural operator in function space. This operator allows for the incorporation of external information during image generation. 
    \item We use this operator to build a conditional denoiser in function space as part of \texttt{$\infty$-Brush}, the first conditional diffusion model in function space.
    \item We ensure tractable training of our model on very large images by only training on $0.4\%$ subsets of pixels while inferring at arbitrary resolutions.
    \item We show how our method generates images at the hencetofore infeasible size of $4096\times4096$ pixels while maintaining both global structure and fine details.
\end{itemize}

\section{Related Work}
\textbf{Controllable Generation with Diffusion Models.} Diffusion models \cite{pmlr-v37-sohl-dickstein15, ho2020denoising, Le_Nguyen_Le_Do_Do_Tran_2024} synthesize data by reversing a diffusion process. Latent Diffusion Models (LDMs) \cite{rombach2022high} operate in a lower-dimensional latent space rather than pixel space, significantly reducing the computational load and enabling the generation of high-quality images. Controllable generation is achieved by conditioning on desired attributes, such as class-conditioning \cite{nichol2021improved}, gradient-based guidance \cite{dhariwal2021diffusion}, and classifier-free guidance \cite{ho2022classifier}.

\noindent\textbf{Large Image Generation.} SDXL \cite{podell2023sdxl} makes a step towards large image generation with its ability to generate higher-resolution images. However, controllable generation with SDXL cannot scale to large images because it is constrained to synthesize images only at the resolution on which it was trained ($1024 \times 1024$), leading to a quadratic increase in computational demands with resolution. Our diffusion model, \texttt{$\infty$-Brush}, learns to controllably synthesize images in function space which enables us to generate large images at any desired resolution of up to $4096 \times 4096$ by only training on subsets of $65536$ pixels. 

The patch-based approach for controllable generation, exemplified by MultiDiffusion \cite{bar-tal2023multidiffusion} and adapted in \cite{graikos2023learned}, efficiently generates large images by synthesizing individual patches that are later combined. Despite its computational efficiency and ability to produce realistic images, this method struggles to capture long-range spatial dependencies due to its use of only local information. In contrast, our model operates on the \textbf{entirety} of the image, as represented by a function, maintaining large-scale structures and long-range dependencies.

\noindent\textbf{Diffusion Models in Infinite Dimensions.} Kerrigan \etal \cite{pmlr-v206-kerrigan23a} introduced the concept of applying diffusion models to functional data, pioneering the idea that generative models can operate beyond the confines of finite-dimensional spaces. Building on the ideas of infinite-dimensional diffusion, Lim \etal \cite{lim2023scorebased} and $\infty$-Diff \cite{bond2023infty} specifically address the generation of images represented in function space. However, the methods cannot be conditioned for controllable image generation. To the best of our knowledge, our \texttt{$\infty$-Brush} with a novel cross-attention neural operator is the first conditional diffusion model in infinite dimensions designed for controllable large image synthesis.

\section{Preliminaries}
\subsection{Notation and Data}
\label{subsection:notation}
Let $(\mathcal{X}, \mathcal{A}, \mu)$ be a measure space where $\mathcal{X} \subseteq \mathbb{R}^{d_x}$, $\mathcal{A}$ is a $\sigma$-algebra on the set $\mathcal{X}$ and $\mu$ is a measure on $(\mathcal{X}, \mathcal{A})$. Let $\mathcal{H}$ be a separable Hilbert space over the domain $\mathcal{X}$, equipped with its Borel $\sigma$-algebra $\mathcal{B}(\mathcal{H})$. For simplicity, we consider the case where $\mathcal{H}$ is the space of $L^2$ functions $\mathcal{H} = L^2(\mathcal{X}, \mu)$, which is equipped with its inner product $\langle \mathbf{f}, \mathbf{g}\rangle_{L^2(\mathcal{X}, \mu)} = \int_\mathcal{X}\mathbf{f}\mathbf{g}\mathrm{d}\mu$. It is worth noting that our method is agnostic to the choice of $\mathcal{H}$ and can be applied to other spaces.

Assuming that we have a dataset of the form $\mathcal{D} = \{(\mathbf{u}_k, \mathbf{e}_k)\}_{1 \le k \le D}$, where each $\mathbf{u}_j \in \mathcal{H}$ is an i.i.d. draw from an unknown probability measure $\mathbb{Q}_\mathrm{data}$ on $\mathcal{H}$ and $\mathbf{e}_j$ is a control component of the corresponding function $\mathbf{u}_j$. In our experiment settings, $\mathbf{e}_j$ can be a label or an embedding vector (from vision-language or self-supervised models) with finite dimensions. 

In practice, it is difficult to represent the function directly and instead, for an input function $\mathbf{u}_j$, we discretize it on the mesh $\mathbf{x}_j = \{\mathbf{x}_j^{(i)}\}_{1 \le i \le N} \subset \mathcal{X}$, which is a discrete subset on $\mathcal{X}$ with corresponding discretized observations $\bigl\{\mathbf{u}_j\bigl(\mathbf{x}_j^{(i)}\bigl)\bigl\}_{1 \le i \le N}$, being the output of function $\mathbf{u}_j$ at the $i$-th observation point.  

\subsection{Gaussian Measures on Hilbert Spaces}
Let $\mathbb{Q}$ be a probability measure on $(\mathcal{H}, \mathcal{B}(\mathcal{H}))$. If $\mathbb{Q}$ is Gaussian, then there exists a mean element $\mathbf{m} \in \mathcal{H}$ and a covariance operator $\mathbf{C}: \mathcal{H} \rightarrow \mathcal{H}$, such that
\begin{equation}
    \int_\mathcal{H} \langle \mathbf{u}, \mathbf{x}\rangle \mathbb{Q}(\mathrm{d}\mathbf{x}) = \langle \mathbf{m}, \mathbf{u} \rangle,\quad\forall \mathbf{u} \in \mathcal{H},    
\end{equation}
\begin{equation}
    \int_\mathcal{H}\langle \mathbf{u}_1, \mathbf{x} - \mathbf{m} \rangle \langle \mathbf{u}_2, \mathbf{x} - \mathbf{m}\rangle \mathbb{Q}(\mathrm{d}\mathbf{x}) = \langle \mathbf{C}\mathbf{u}_1,\mathbf{u}_2 \rangle,\quad\forall \mathbf{u}_1, \mathbf{u}_2 \in \mathcal{H}. 
\end{equation}
The covariance operator $\mathbf{C}$ is symmetric, positive semi-definite, compact, and has finite trace $\mathrm{Tr}(\mathbf{C}) < +\infty$. Conversely, let $C$ be a positive, symmetric, trace class operator in $\mathcal{H}$ and let $\mathbf{m} \in \mathcal{H}$, then there exists a Gaussian measure in $\mathcal{H}$ with mean $\mathbf{m}$ and covariance $\mathbf{C}$ \cite{DaPrato_Zabczyk_2014}. From now on, we will denote $\mathbb{Q} = \mathcal{N}(\mathbf{m}, \mathbf{C})$ for such a Gaussian measure. 
\subsection{Diffusion Models in Function Space}
Here we briefly describe diffusion probabilistic models in function space $\mathcal{H}$ \cite{pmlr-v206-kerrigan23a, lim2023scorebased, bond2023infty}, which is constructed similarly to that of DDPMs \cite{ho2020denoising}. Note that the key difference is that diffusion models in function space operate in infinite dimensions.

\noindent\textbf{Forward process.} The forward process of a diffusion model in function space is defined as a discrete-time Markov chain that incrementally perturbs probability measure $\mathbb{Q}_{\mathrm{data}}$ towards a Gaussian measure $\mathcal{N}(\mathbf{m}, \mathbf{C})$ with a zero mean and a specified covariance operator $\mathbf{C}$. It is a time-indexed process where each step $\mathbf{u}_t$ is obtained by applying a transformation to the previous step $\mathbf{u}_{t-1}$, which involves a scaling factor $\sqrt{1-\beta_t}\mathbf{u}_{t-1}$ related to the variance schedule $\beta$, and adding scaled Gaussian noise $\sqrt{\beta_t}\bm{\xi}_t$ with $\bm{\xi}_t \sim \mathcal{N}(\mathbf{0}, \mathbf{C})$:
\begin{equation}
    \mathbf{u}_t = \sqrt{1-\beta_t}\mathbf{u}_{t-1} + \sqrt{\beta_t}\bm{\xi}_t \quad\quad t=1,2,\dots,T.
\end{equation}
Similar to diffusion models in finite dimensions, the forward process in function space also admits sampling $\mathbf{u}_t$ at an arbitrary timestep $t$ in closed form. For $\Bar{\alpha}_t = \prod_{i=1}^t(1-\beta_t)$, we have:
\begin{equation}
    \mathbb{Q}\left(\mathbf{u}_t\vert\mathbf{u}_0\right) = \mathcal{N}\left(\mathbf{u}_t; \sqrt{\Bar{\alpha}_t}\mathbf{u}_0, (1-\Bar{\alpha}_t)\mathbf{C}\right).
\end{equation}

\noindent\textbf{Reverse process.} The reverse process in the diffusion model iteratively denoises from the Gaussian measure $\mathcal{N}(\mathbf{m},\mathbf{C})$ back to the probability measure $\mathbb{Q}_0 = \mathbb{Q}_{\mathrm{data}}$. This is achieved by sampling from the reverse-time transition measures $\mathbb{Q}\left(\mathbf{u}_{t-1}\vert\mathbf{u}_t\right)$, approximated by a Gaussian measure with parameters $\theta$ due to the intractable normalization constant of the Bayes' rule: 
\begin{equation}
    \mathbb{P}_\theta\left(\mathbf{u}_{t-1}\vert\mathbf{u}_t\right) = \mathcal{N}\left(\mathbf{u}_{t-1}; \mathbf{m}_\theta(\mathbf{u}_t, t), \mathbf{C}_\theta(\mathbf{u}_t, t)\right)
\end{equation}
Likewise, we are able to derive a closed-form representation of the forward process posteriors, which are tractable when conditioned on $\mathbf{u}_0$:
\begin{equation}
    \mathbb{Q}\left(\mathbf{u}_{t-1}\vert\mathbf{u}_t, \mathbf{u}_0\right) = \mathcal{N}\left(\mathbf{u}_{t-1};\mathbf{\tilde{m}}_t(\mathbf{u}_t, \mathbf{u}_0), \Tilde{\beta}_t\mathbf{C}\right),
\end{equation}
where $\mathbf{\tilde{m}}_t(\mathbf{u}_t,\mathbf{u}_0) = \frac{\sqrt{\Bar{\alpha}_{t-1}}\beta_t}{1-\Bar{\alpha}_t}\mathbf{u}_0 + \frac{\sqrt{1-\beta_t}(1-\Bar{\alpha}_{t-1})}{1-\Bar{\alpha}_t}\mathbf{u}_t$ and $\Tilde{\beta}_t = \frac{1-\Bar{\alpha}_{t-1}}{1-\Bar{\alpha}_t}\beta_t$.

\noindent\textbf{Training objective.} Similar to DDPM in finite dimensions, the reparameterization is used to achieve better training, which results in a simplified loss:
\begin{equation}
    \mathcal{L}_{\mathrm{simple}} = \left\vert\left\vert \mathbf{C}^{-1/2}\left(\bm{\xi}_t - \bm{\xi}_\theta(\mathbf{u}_t, t)\right)\right\vert\right\vert_\mathcal{H}^2.
\end{equation}

\subsection{Neural Operators} 
Neural operators \cite{kovachi2024neuraloperator, cao2021choose, hao2023gnot} are a type of neural network tailored to learn mappings between infinite-dimensional function spaces. In the context of diffusion models in infinite dimensions, a denoiser is parameterized by a neural operator $\mathcal{G}_\theta: \mathcal{U}^* \rightarrow \mathcal{U}$, which learns to map from noisy function space $\mathcal{U}^*$ to denoised function space $\mathcal{U}$. With $\mathbf{u} \in \mathcal{U}^*$ and $\mathbf{s} \in \mathcal{U}$, we access their pointwise evaluations by discretizing them on the mesh $\mathbf{x} = \{\mathbf{x}^{(i)}\}_{1 \le i \le N} \subset \mathcal{X}$. Neural operators include multiple operator layers akin to those in a finite-dimensional neural network $\mathbf{v}_0 \mapsto \mathbf{v}_1 \mapsto \cdots \mapsto \mathbf{v}_L$, where layer $\mathbf{v}_l \mapsto \mathbf{v}_{l+1}$ is built upon a local linear operator, a non-local integral kernel operator, and a bias function:
\begin{equation}
    \mathbf{v}_{l+1}(\mathbf{x}^{(i)}) = \sigma_{l+1}\left(W_l\mathbf{v}_l(\mathbf{x}^{(i)}) + (\mathcal{K}_l(\mathbf{u}; \phi)\mathbf{v}_l)(\mathbf{x}^{(i)}) + b_l(\mathbf{x}^{(i)})\right),
\end{equation}
with $\mathcal{K}_l(\mathbf{u}; \phi)$ being an integral kernel operator aggregating information spatially.
\section{The Proposed Method} 
We propose a novel conditional diffusion model in function space $\mathcal{H}$. Based on the background provided, we now formulate the forward and reverse process and the training objective of our conditional diffusion model in infinite dimensions. Furthermore, we present a novel architecture to parameterize the denoising process with a conditional denoiser equipped with cross-attention neural operators. 

\subsection{Conditional Diffusion Models in Function Space}
In the context of image generation, we discretize the function $\mathbf{u}_j$ on the mesh $\mathbf{x}_j = \{\mathbf{x}_j^{(i)}\}_{1 \le i \le N} \subset \mathcal{X}$ by sampling $N$ coordinates of each image, which results in non-smooth input space. To achieve a smoother function representation, a smoothing operator \cite{hoogeboom2023blurring, rissanen2023generative} $\mathbf{A}: \mathcal{H} \rightarrow \mathcal{H}$, \eg a truncated Gaussian kernel, is applied to approximate the rough inputs within the function space $\mathcal{H}$.

\noindent\textbf{Forward process.} The forward process of our conditional diffusion model in infinite dimensions is equivalent to that of an unconditional diffusion model in function space, which gradually perturbs the probability measure $\mathbb{Q}_0 = \mathbb{Q}_\mathrm{data}$ towards a Gaussian measure $\mathcal{N}(\mathbf{m}, \mathbf{C})$ and enables sampling at any arbitrary timestep $t$ with $\Bar{\alpha}_t = \prod_{i=1}^t(1-\beta_t)$:
\begin{equation}
    \mathbb{Q}\left(\mathbf{u}_t\vert\mathbf{u}_0\right) = \mathcal{N}\left(\mathbf{u}_t; \sqrt{\Bar{\alpha}_t}\mathbf{A}\mathbf{u}_0, (1-\Bar{\alpha}_t)\mathbf{A}\mathbf{C}\mathbf{A}^T\right). 
\end{equation}

\noindent\textbf{Reverse process.} We use a variational approach to approximate posterior measures with a variational family of measures on $\mathcal{H}$ and incorporate the conditional embedding $\mathbf{e}$ to control the generation process. We model the underlying posterior measure $\mathbb{Q}(\mathbf{u}_{t-1}\vert\mathbf{u}_t)$ with a conditional Gaussian measure:
\begin{equation}
    \mathbb{P}_\theta(\mathbf{u}_{t-1}\vert\mathbf{u}_t, \mathbf{e}) = \mathcal{N}\left(\mathbf{u}_{t-1}; \mathbf{m}_\theta(\mathbf{u}_t, \mathbf{e}, t), \mathbf{A}\mathbf{C}_\theta(\mathbf{u}_t, \mathbf{e}, t)\mathbf{A}^T\right).
\end{equation}

\begin{proposition}[Learning Objective]{The cross-entropy of conditional diffusion models in function space has a variational upper bound of}
\begin{align}
    \mathcal{L}_{\mathrm{CE}} = -\mathbb{E}_\mathbb{Q}\log{\mathbb{P}_\theta(\mathbf{u}_0\vert\mathbf{e})} &\le \mathbb{E}_\mathbb{Q} \Bigg[\underbrace{\mathrm{KL}(\mathbb{Q}(\mathbf{u}_T \vert \mathbf{u}_0) \parallel \mathbb{P}_\theta(\mathbf{u}_T))}_{\mathcal{L}_T} \underbrace{-\log \mathbb{P}_\theta(\mathbf{u}_0 \vert \mathbf{u}_1, \mathbf{e})}_{\mathcal{L}_0} \notag\\
&+ \sum_{t=2}^{T} \underbrace{\mathrm{KL}(\mathbb{Q}(\mathbf{u}_{t-1} \vert \mathbf{u}_{t}, \mathbf{u}_0) \parallel \mathbb{P}_\theta(\mathbf{u}_{t-1} \vert\mathbf{u}_{t}, \mathbf{e})}_{\mathcal{L}_{t-1}}\Bigg].
\end{align}
\label{proposition_main:objective}
\end{proposition}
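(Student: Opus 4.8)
The plan is to mirror the classical DDPM evidence-lower-bound derivation, now carried out for probability measures on the Hilbert space $\mathcal{H}$ and with the conditioning variable $\mathbf{e}$ carried along. I would start from $\mathcal{L}_{\mathrm{CE}} = -\mathbb{E}_{\mathbb{Q}}\log \mathbb{P}_\theta(\mathbf{u}_0\vert\mathbf{e})$, introduce the latent chain $\mathbf{u}_{1:T}$, and write $\mathbb{P}_\theta(\mathbf{u}_0\vert\mathbf{e}) = \mathbb{E}_{\mathbb{Q}(\mathbf{u}_{1:T}\vert\mathbf{u}_0)}\!\left[\frac{d\mathbb{P}_\theta(\mathbf{u}_{0:T}\vert\mathbf{e})}{d\mathbb{Q}(\mathbf{u}_{1:T}\vert\mathbf{u}_0)}\right]$, where the ratio is the Radon--Nikodym derivative between the reverse-process joint law and the forward-process joint law, the latter being independent of $\mathbf{e}$. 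Applying Jensen's inequality to the concave logarithm gives $\mathcal{L}_{\mathrm{CE}} \le -\mathbb{E}_{\mathbb{Q}}\log\frac{d\mathbb{P}_\theta(\mathbf{u}_{0:T}\vert\mathbf{e})}{d\mathbb{Q}(\mathbf{u}_{1:T}\vert\mathbf{u}_0)}$, the functional analogue of the usual variational bound.

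Next I would factorize both joint laws using their Markov structure, $d\mathbb{P}_\theta(\mathbf{u}_{0:T}\vert\mathbf{e}) = d\mathbb{P}_\theta(\mathbf{u}_T)\prod_{t=1}^T d\mathbb{P}_\theta(\mathbf{u}_{t-1}\vert\mathbf{u}_t,\mathbf{e})$ and $d\mathbb{Q}(\mathbf{u}_{1:T}\vert\mathbf{u}_0) = \prod_{t=1}^T d\mathbb{Q}(\mathbf{u}_t\vert\mathbf{u}_{t-1})$, and then rewrite each forward transition for $t\ge 2$ via Bayes' rule in the $\mathbf{u}_0$-conditioned form $d\mathbb{Q}(\mathbf{u}_t\vert\mathbf{u}_{t-1}) = d\mathbb{Q}(\mathbf{u}_t\vert\mathbf{u}_{t-1},\mathbf{u}_0) = \frac{d\mathbb{Q}(\mathbf{u}_{t-1}\vert\mathbf{u}_t,\mathbf{u}_0)\, d\mathbb{Q}(\mathbf{u}_t\vert\mathbf{u}_0)}{d\mathbb{Q}(\mathbf{u}_{t-1}\vert\mathbf{u}_0)}$, using the Markov property and the closed-form forward marginals recorded in the preliminaries. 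The ratio of the $\mathbf{u}_0$-conditioned marginals telescopes over $t=2,\dots,T$ to $d\mathbb{Q}(\mathbf{u}_1\vert\mathbf{u}_0)/d\mathbb{Q}(\mathbf{u}_T\vert\mathbf{u}_0)$, which cancels the $t=1$ forward term and pairs the $\mathbf{u}_T$ terms together. Collecting what remains and taking $\mathbb{E}_{\mathbb{Q}}$, the $\mathbf{u}_T$ terms assemble into $\mathcal{L}_T = \mathrm{KL}(\mathbb{Q}(\mathbf{u}_T\vert\mathbf{u}_0)\,\|\,\mathbb{P}_\theta(\mathbf{u}_T))$, the $t\ge 2$ terms into $\mathcal{L}_{t-1} = \mathrm{KL}(\mathbb{Q}(\mathbf{u}_{t-1}\vert\mathbf{u}_t,\mathbf{u}_0)\,\|\,\mathbb{P}_\theta(\mathbf{u}_{t-1}\vert\mathbf{u}_t,\mathbf{e}))$ once $\mathbb{E}_{\mathbb{Q}(\mathbf{u}_t\vert\mathbf{u}_0)}$ of the pointwise derivative is recognized as an expectation of a conditional KL, and the leftover $t=1$ reverse term into $\mathcal{L}_0 = -\log\mathbb{P}_\theta(\mathbf{u}_0\vert\mathbf{u}_1,\mathbf{e})$.

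The genuinely new step relative to the finite-dimensional argument is measure-theoretic: on an infinite-dimensional $\mathcal{H}$ there is no Lebesgue reference measure, so every ratio above must be read as a Radon--Nikodym derivative and every $\mathrm{KL}$ as $\mathrm{KL}(\nu\,\|\,\rho) = \int \log\frac{d\nu}{d\rho}\,d\nu$, which is finite only when $\nu \ll \rho$. I would therefore first verify equivalence of the relevant Gaussian measures --- via the Feldman--H\'ajek / Cameron--Martin theorem, exactly as in Kerrigan \etal \cite{pmlr-v206-kerrigan23a} and $\infty$-Diff \cite{bond2023infty} --- so that the forward transitions $\mathbb{Q}(\mathbf{u}_t\vert\mathbf{u}_{t-1})$, the posteriors $\mathbb{Q}(\mathbf{u}_{t-1}\vert\mathbf{u}_t,\mathbf{u}_0)$, and the parameterized reverse kernels $\mathbb{P}_\theta(\mathbf{u}_{t-1}\vert\mathbf{u}_t,\mathbf{e})$ admit a common dominating measure and the chain-rule and Bayes manipulations are legitimate at the level of densities with respect to it. This is also where the smoothing operator $\mathbf{A}$ and the shared covariance structure $\mathbf{A}\mathbf{C}\mathbf{A}^T$ enter, since they keep all the noise measures on the same Cameron--Martin space; once equivalence is in hand, the algebra is identical to the DDPM case and the conditioning variable $\mathbf{e}$ simply rides along inside the reverse kernels without altering the forward side. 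I expect establishing this absolute-continuity scaffolding to be the main obstacle, with the stated bound then following from the telescoping computation above.
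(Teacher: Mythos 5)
Your derivation is correct and follows essentially the same route as the paper's own proof: introduce the latent chain, apply Jensen's inequality to get the variational upper bound, factorize both joints by Markovianity, rewrite the forward transitions for $t\ge 2$ via Bayes' rule conditioned on $\mathbf{u}_0$, and telescope the marginal ratios to assemble $\mathcal{L}_T$, $\mathcal{L}_{t-1}$, and $\mathcal{L}_0$. The only difference is that you explicitly flag the need for a common dominating measure so that the ratios are legitimate Radon--Nikodym derivatives; the paper writes its proof with informal density notation and defers the Feldman--H\'ajek equivalence argument to the subsequent Assumption and Theorem, so your version is if anything slightly more careful on this point.
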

\begin{proof}
Please refer to the Appendix~\ref{appendix:proof} for the full proof.\qed    
\end{proof}

To compute the KL divergence between probability measures $\mathrm{KL}(\mathbb{Q} \parallel \mathbb{P})$, we need to utilize a measure-theoretic definition of the KL divergence, which is stated in the following lemmas \cite{DaPrato_Zabczyk_2014}. 

\begin{lemma}[Measure Equivalence - The Feldman-Hájek Theorem]
    Let $\mathbb{Q} = \mathcal{N}(\mathbf{m}_1,\mathbf{C}_1)$ and $\mathbb{P} = \mathcal{N}(\mathbf{m}_2,\mathbf{C}_2)$ be Gaussian measures on $\mathcal{H}$. They are equivalent if and only if $(i): \mathbf{C}^{1/2}_1(\mathcal{H}) = \mathbf{C}^{1/2}_2(\mathcal{H}) = \mathcal{H}_0$, $(ii): \mathbf{m}_1 - \mathbf{m}_2 \in \mathcal{H}_0$, and $(iii):$ The operator $(\mathbf{C}^{-1/2}_1\mathbf{C}^{1/2}_2)(\mathbf{C}^{-1/2}_1\mathbf{C}^{1/2}_2)^* -\mathbf{I}$ is a Hilbert-Schmidt operator on the closure $\overline{\mathcal{H}_0}$.
\label{lemma_main:measure_equi}
\end{lemma}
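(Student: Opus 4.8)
The plan is to prove the Feldman--Hájek dichotomy together with its characterization by factoring the comparison of $\mathbb{Q} = \mathcal{N}(\mathbf{m}_1,\mathbf{C}_1)$ and $\mathbb{P} = \mathcal{N}(\mathbf{m}_2,\mathbf{C}_2)$ into a \emph{mean part} and a \emph{covariance part}, reducing each to a product-measure problem on $\mathbb{R}^\infty$ to which Kakutani's theorem applies. First I would record the underlying dichotomy: two Gaussian measures on $\mathcal{H}$ are either equivalent or mutually singular, a zero--one law that surfaces once the problem is diagonalized, because the candidate Radon--Nikodym derivative is the exponential of a limit of independent quadratic forms. Establishing this lets me prove necessity of the three conditions by showing that the failure of any one of them forces the relevant Hellinger affinity to vanish, hence singularity.

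For the covariance part I would set $\mathbf{m}_1 = \mathbf{m}_2 = 0$. Condition $(i)$, the equality of Cameron--Martin spaces $\mathbf{C}_1^{1/2}(\mathcal{H}) = \mathbf{C}_2^{1/2}(\mathcal{H}) = \mathcal{H}_0$, is exactly what is needed to make
\[
  \mathbf{R} = (\mathbf{C}_1^{-1/2}\mathbf{C}_2^{1/2})(\mathbf{C}_1^{-1/2}\mathbf{C}_2^{1/2})^* = \mathbf{C}_1^{-1/2}\mathbf{C}_2\mathbf{C}_1^{-1/2}
\]
a bounded, positive, self-adjoint operator on $\overline{\mathcal{H}_0}$ with bounded inverse; without it one measure charges a subspace the other neglects and they are singular. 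Diagonalizing $\mathbf{R}$ and passing to the associated white-noise coordinates turns $\mathbb{Q}$ and $\mathbb{P}$ into the product measures $\prod_n \mathcal{N}(0,1)$ and $\prod_n \mathcal{N}(0,r_n)$, where $\{r_n\}$ are the eigenvalues of $\mathbf{R}$. Kakutani's criterion says these products are equivalent iff $\prod_n \rho_n > 0$, with one-dimensional Hellinger affinity $\rho_n = \bigl(2\sqrt{r_n}/(1+r_n)\bigr)^{1/2}$; expanding $-\log \rho_n$ shows $\prod_n \rho_n > 0 \iff \sum_n (r_n - 1)^2 < \infty$, which is precisely condition $(iii)$, the Hilbert--Schmidt property of $\mathbf{R} - \mathbf{I}$.

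For the mean part I would invoke the Cameron--Martin theorem: with $\mathbf{C}_1 = \mathbf{C}_2 = \mathbf{C}$, translation by $\mathbf{h} = \mathbf{m}_1 - \mathbf{m}_2$ yields an equivalent measure iff $\mathbf{h} \in \mathbf{C}^{1/2}(\mathcal{H}) = \mathcal{H}_0$, with explicit density
\[
  \frac{\mathrm{d}\mathbb{P}}{\mathrm{d}\mathbb{Q}}(\mathbf{x}) = \exp\Bigl(\langle \mathbf{C}^{-1/2}\mathbf{h}, \mathbf{C}^{-1/2}(\mathbf{x}-\mathbf{m}_1)\rangle - \tfrac{1}{2}\lVert \mathbf{C}^{-1/2}\mathbf{h}\rVert^2\Bigr),
\]
where the pairing is read as a Paley--Wiener integral; when $\mathbf{h} \notin \mathcal{H}_0$ the coordinate sums defining this functional diverge and the measures are singular, giving condition $(ii)$. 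Finally I would combine the two parts along the chain $\mathcal{N}(\mathbf{m}_1,\mathbf{C}_1) \to \mathcal{N}(\mathbf{m}_2,\mathbf{C}_1) \to \mathcal{N}(\mathbf{m}_2,\mathbf{C}_2)$, multiplying the two Radon--Nikodym derivatives, so that equivalence holds iff $(i)$, $(ii)$, and $(iii)$ hold simultaneously.

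The main obstacle is the covariance part when $\mathbf{C}_1$ and $\mathbf{C}_2$ do not commute: one cannot diagonalize both in a single fixed eigenbasis, so the reduction to product measures must proceed through the unbounded operator $\mathbf{C}_1^{-1/2}$ on its dense Cameron--Martin domain. Making this rigorous --- defining $\mathbf{R}$ on $\overline{\mathcal{H}_0}$, justifying the white-noise change of variables that simultaneously renders both measures products, and checking that condition $(i)$ excludes the range-mismatch singularities --- is the technical heart; once the eigenvalues $r_n$ are in hand, the passage from $\sum_n (r_n-1)^2 < \infty$ to convergence of the Hellinger product is a routine one-dimensional estimate.
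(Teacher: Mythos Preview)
Your outline is a correct and standard route to the Feldman--H\'ajek theorem: the split into a mean shift handled by Cameron--Martin and a centered covariance comparison handled by diagonalizing $\mathbf{R} = \mathbf{C}_1^{-1/2}\mathbf{C}_2\mathbf{C}_1^{-1/2}$ and applying Kakutani's product-measure criterion is exactly how the textbook proofs proceed, and your identification of the non-commuting case as the delicate point is accurate.

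That said, the paper does not actually prove this lemma. Its entire ``proof'' consists of the sentence \emph{Refer to the proof of Theorem 2.25 of Da Prato and Zabczyk}. The lemma is stated only as a cited background result to be used downstream (in Theorem~1), not as something the authors establish. So there is nothing to compare against: your sketch is substantially more detailed than what the paper offers, and it is in fact a faithful summary of the argument one finds in Da Prato--Zabczyk (or Bogachev's \emph{Gaussian Measures}). If the goal is to match the paper, a one-line citation suffices; if the goal is to supply a self-contained proof, your plan is sound, with the caveat you already flagged that making the white-noise change of variables rigorous when $\mathbf{C}_1$ and $\mathbf{C}_2$ do not commute requires care in defining $\mathbf{R}$ on the Cameron--Martin space.
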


\begin{lemma}[The Radon-Nikodym Derivative]
Let $\mathbb{Q} = \mathcal{N}(\mathbf{m}_1,\mathbf{C}_1)$ and $\mathbb{P} = \mathcal{N}(\mathbf{m}_2,\mathbf{C}_2)$ be Gaussian measures on $\mathcal{H}$. If $\mathbb{P}$ and $\mathbb{Q}$ are equivalent and $\mathbf{C}_1 = \mathbf{C}_2 = \mathbf{C}$, then $\mathbb{P}$-a.s. the Radon-Nikodym derivative $\mathrm{d}\mathbb{Q}/\mathrm{d}\mathbb{P}$ is given by
\begin{equation}    
\resizebox{\textwidth}{!}{
$\frac{\mathrm{d} \mathbb{Q}}{\mathrm{d} \mathbb{P}}(\mathbf{f})=\exp \left[\left\langle \mathbf{C}^{-1 / 2}\left(\mathbf{m}_1-\mathbf{m}_2\right), \mathbf{C}^{-1 / 2}\left(\mathbf{f}-\mathbf{m}_2\right)\right\rangle-\frac{1}{2} \| \mathbf{C}^{-1 / 2} (\mathbf{m}_1-\mathbf{m}_2) \|^2\right] \forall \mathbf{f} \in \mathcal{H}.$
} 
\end{equation}
\label{lemma_main:radon}
\end{lemma}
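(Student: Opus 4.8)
The plan is to mimic the finite-dimensional DDPM evidence-lower-bound derivation, but to replace every appeal to a probability density with a Radon--Nikodym derivative relative to a common reference measure on $\mathcal{H}$, since no Lebesgue measure exists in infinite dimensions. First I would introduce the full chain of latents $\mathbf{u}_{1:T}$ and express the marginal likelihood as an integral of the joint reverse measure $\mathbb{P}_\theta(\mathbf{u}_{0:T}\vert\mathbf{e})$ against the joint forward measure $\mathbb{Q}(\mathbf{u}_{1:T}\vert\mathbf{u}_0)$, writing
\[
\log\mathbb{P}_\theta(\mathbf{u}_0\vert\mathbf{e}) = \log\mathbb{E}_{\mathbb{Q}(\mathbf{u}_{1:T}\vert\mathbf{u}_0)}\!\left[\frac{\mathrm{d}\mathbb{P}_\theta(\mathbf{u}_{0:T}\vert\mathbf{e})}{\mathrm{d}\mathbb{Q}(\mathbf{u}_{1:T}\vert\mathbf{u}_0)}\right],
\]
where the ratio is the Radon--Nikodym derivative of the joint reverse measure with respect to the joint forward measure. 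Applying Jensen's inequality to the concave logarithm pushes the expectation inside to give $-\log\mathbb{P}_\theta(\mathbf{u}_0\vert\mathbf{e}) \le -\mathbb{E}_{\mathbb{Q}}[\log(\mathrm{d}\mathbb{P}_\theta/\mathrm{d}\mathbb{Q})]$, and a further expectation over $\mathbf{u}_0\sim\mathbb{Q}$ recovers the cross-entropy $\mathcal{L}_{\mathrm{CE}}$ on the left.

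Next I would factor both joint measures using the Markov structure: the forward chain as $\prod_{t=1}^{T}\mathbb{Q}(\mathbf{u}_t\vert\mathbf{u}_{t-1})$ and the reverse chain as $\mathbb{P}_\theta(\mathbf{u}_T)\prod_{t=1}^{T}\mathbb{P}_\theta(\mathbf{u}_{t-1}\vert\mathbf{u}_t,\mathbf{e})$, so that the log-ratio splits into a terminal term plus a sum of transition log-ratios. The decisive algebraic step is to rewrite each forward transition via Bayes' rule conditioned on $\mathbf{u}_0$,
\[
\mathbb{Q}(\mathbf{u}_t\vert\mathbf{u}_{t-1}) = \mathbb{Q}(\mathbf{u}_t\vert\mathbf{u}_{t-1},\mathbf{u}_0) = \frac{\mathbb{Q}(\mathbf{u}_{t-1}\vert\mathbf{u}_t,\mathbf{u}_0)\,\mathbb{Q}(\mathbf{u}_t\vert\mathbf{u}_0)}{\mathbb{Q}(\mathbf{u}_{t-1}\vert\mathbf{u}_0)},
\]
which introduces a telescoping product in the marginals $\mathbb{Q}(\mathbf{u}_t\vert\mathbf{u}_0)$. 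After telescoping, the boundary marginals collapse into the terminal KL term $\mathcal{L}_T=\mathrm{KL}(\mathbb{Q}(\mathbf{u}_T\vert\mathbf{u}_0)\parallel\mathbb{P}_\theta(\mathbf{u}_T))$, the $t=1$ contribution becomes the reconstruction term $\mathcal{L}_0=-\log\mathbb{P}_\theta(\mathbf{u}_0\vert\mathbf{u}_1,\mathbf{e})$, and pairing each forward-posterior log-term with the matching reverse transition for $t=2,\dots,T$ regroups into the posterior-matching KL terms $\mathcal{L}_{t-1}=\mathrm{KL}(\mathbb{Q}(\mathbf{u}_{t-1}\vert\mathbf{u}_t,\mathbf{u}_0)\parallel\mathbb{P}_\theta(\mathbf{u}_{t-1}\vert\mathbf{u}_t,\mathbf{e}))$, yielding exactly the claimed decomposition.

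The hard part will be justifying that every Radon--Nikodym derivative above is well-defined, i.e. that the relevant pairs of Gaussian measures on $\mathcal{H}$ are mutually absolutely continuous, and that the formal Bayes manipulation is legitimate relative to a common reference measure rather than a nonexistent Lebesgue base. This is precisely where the Feldman--Hájek theorem (Lemma~\ref{lemma_main:measure_equi}) enters: I would verify its three conditions for the forward transitions, the forward posteriors $\mathbb{Q}(\mathbf{u}_{t-1}\vert\mathbf{u}_t,\mathbf{u}_0)$, and the parameterized reverse transitions, all of which share the covariance structure built from $\mathbf{C}$ and $\mathbf{A}\mathbf{C}\mathbf{A}^T$. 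Because by construction these measures differ only in their means while sharing a common covariance operator, conditions $(i)$ and $(iii)$ hold automatically and $(ii)$ reduces to a Cameron--Martin membership check on the mean differences. The explicit log-ratio supplied by Lemma~\ref{lemma_main:radon} then makes each transition log-term integrable, so Fubini--Tonelli legitimizes exchanging the sum, the logarithm, and the expectations, and the derivation above becomes rigorous, establishing the variational upper bound.
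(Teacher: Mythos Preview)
Your proposal does not address the stated lemma at all. The statement is Lemma~\ref{lemma_main:radon}, which asserts the explicit Cameron--Martin--type formula for the Radon--Nikodym derivative $\mathrm{d}\mathbb{Q}/\mathrm{d}\mathbb{P}$ when $\mathbb{Q}=\mathcal{N}(\mathbf{m}_1,\mathbf{C})$ and $\mathbb{P}=\mathcal{N}(\mathbf{m}_2,\mathbf{C})$ are equivalent Gaussian measures on $\mathcal{H}$ with a common covariance operator. What you have written instead is a derivation of the variational upper bound of Proposition~\ref{proposition_main:objective}: introducing latents, applying Jensen, factoring via Markov structure, telescoping with Bayes' rule, and regrouping into $\mathcal{L}_T$, $\mathcal{L}_0$, and the $\mathcal{L}_{t-1}$ terms. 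That is a different statement entirely; your argument \emph{uses} Lemma~\ref{lemma_main:radon} (to ensure the log-ratios are well-defined and integrable) rather than proving it.

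To prove the lemma itself you would need to establish the claimed exponential formula for $\mathrm{d}\mathbb{Q}/\mathrm{d}\mathbb{P}$, for instance by diagonalising $\mathbf{C}$ via the spectral theorem, reducing to a product of one-dimensional Gaussian density ratios along the eigenbasis, and showing the resulting infinite product converges $\mathbb{P}$-a.s.\ precisely under the Feldman--H\'ajek conditions (which here reduce to $\mathbf{m}_1-\mathbf{m}_2\in\mathbf{C}^{1/2}(\mathcal{H})$). The paper does not carry this out either: its proof of Lemma~\ref{lemma_main:radon} is simply a reference to Theorem~2.23 of Da~Prato and Zabczyk. So an acceptable response here is just to cite that source, but your current write-up is a proof of a different proposition.
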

\begin{proof}
    The proof of both lemmas is in the Appendix~\ref{appendix:proof}.\qed
\end{proof}

Lemma~\ref{lemma_main:measure_equi} states the three conditions for the equivalence of two Gaussian measures. Lemma~\ref{lemma_main:radon}, a consequence of the Feldman-Hájek theorem, provides the Radon-Nikodym derivative formula for Gaussian measures on $\mathcal{H}$.

To train the diffusion model in functional space we have to minimize the upper bound of Proposition~\ref{proposition_main:objective}, which requires us to compute the KL divergence between the measures $\mathbb{Q}, \mathbb{P}$. In order to satisfy Lemma~\ref{lemma_main:measure_equi}, which will enable us to use Lemma~\ref{lemma_main:radon} to compute the KL divergence, we make the following assumption:
\begin{assumption}
Let $\mathbb{Q} = \mathcal{N}(\mathbf{\tilde{m}}_t(\mathbf{u}_t, \mathbf{u}_0), \Tilde{\beta}_t\mathbf{C})$ and $\mathbb{P}_\theta = \mathcal{N}(\mathbf{m}_\theta(\mathbf{u}_t, \mathbf{e}, t), \Tilde{\beta}_t\mathbf{C})$ be Gaussian measures on $\mathcal{H}$. With a conditional component $\mathbf{e}$, which can be an element of finite-dimensional space $\mathbb{R}^d$ or Hilbert space $\mathcal{H}$, there exists a parameter set $\theta$ such that the difference in mean elements of the two measures falls within the scaled covariance space: 
\label{assumption_main:measure_equi}
\begin{equation}
\mathbf{\tilde{m}}_t(\mathbf{u}_t, \mathbf{u}_0)-\mathbf{m}_\theta(\mathbf{u}_t, \mathbf{e}, t) \in (\Tilde{\beta}_t\mathbf{C})^{1/2}(\mathcal{H}).    
\end{equation}
\end{assumption}

By making this assumption we satisfy all three conditions of Lemma~\ref{lemma_main:measure_equi}:
$(i): \mathbf{C}^{1/2}_1(\mathcal{H}) = \mathbf{C}^{1/2}_2(\mathcal{H}) = (\Tilde{\beta}_t\mathbf{C})^{1/2}(\mathcal{H}) = \mathcal{H}_0$; 
$(ii):  \mathbf{m}_1 - \mathbf{m}_2 \in \mathcal{H}_0$  is directly satisfied from Assumption~\ref{assumption_main:measure_equi}; 
$(iii): (\mathbf{C}^{-1/2}_1\mathbf{C}^{1/2}_2)(\mathbf{C}^{-1/2}_1\mathbf{C}^{1/2}_2)^* -\mathbf{I} = \mathbf{I} - \mathbf{I}$ is the zero operator, which is trivially a Hilbert-Schmidt operator as its Hilbert-Schmidt norm is $0$.
As a consequence, $\mathbb{Q}$ and $\mathbb{P}$ are equivalent, allowing us to utilize the Radon-Nikodym derivative from Lemma~\ref{lemma_main:radon}.

\begin{theorem}[Conditional Diffusion Optimality in Function Space]\\
    Given the specified conditions in Assumption~\ref{assumption_main:measure_equi}, the minimization of the learning objective in Proposition~\ref{proposition_main:objective} is equivalent to obtaining the parameter set $\theta^*$ that is the solution to the problem
    {\small
    \begin{equation}
       \theta^* = \argmin_\theta \mathbb{E}_{\mathbf{u}_0\sim\mathbb{Q}_{\mathrm{data}}, t\sim[1, T]} \lambda_t\left\vert\left\vert \mathbf{C}^{-1/2}\left(\mathbf{A}\bm{\xi} - \bm{\xi}_\theta(\sqrt{\Bar{\alpha}_t}\mathbf{A}\mathbf{u}_0 + \sqrt{1-\Bar{\alpha}_t}\mathbf{A}\bm{\xi}, \mathbf{e}
       ,t)\right)\right\vert\right\vert_\mathcal{H}^2,
    \end{equation}}
where $\bm{\xi}\sim\mathcal{N}(\mathbf{0}, \mathbf{C})$, $\mathbf{A}: \mathcal{H}\rightarrow\mathcal{H}$ denotes a smoothing operator, $\mathbf{e} \in (\mathbb{R}^d \cup \mathcal{H})$ is a conditional component, $\bm{\xi}_\theta : \{1, 2, \dots, T\} \times (\mathbb{R}^d \cup \mathcal{H}) \times \mathcal{H} \rightarrow \mathcal{H}$ is a parameterized mapping, $\lambda_t = \beta^2_t/2\Tilde{\beta}_t(1-\beta_t)(1-\Bar{\alpha}_t) \in \mathbb{R}$ is a time-dependent constant.
\end{theorem}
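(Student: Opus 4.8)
The plan is to follow the standard DDPM reduction, but carried out with measure-theoretic KL divergences on the Hilbert space $\mathcal{H}$. Starting from the variational bound in Proposition~\ref{proposition_main:objective}, I would dispose of $\mathcal{L}_T$ first: since the forward process has no learnable parameters and $\bar\alpha_T \to 0$ by design, $\mathbb{Q}(\mathbf{u}_T\vert\mathbf{u}_0)$ is approximately $\mathcal{N}(\mathbf{0},\mathbf{A}\mathbf{C}\mathbf{A}^T)$ independently of $\theta$, so $\mathcal{L}_T$ is a constant and drops out of the minimization. The bulk of the argument is the sum $\sum_{t=2}^T \mathcal{L}_{t-1}$ of KL terms between $\mathbb{Q}(\mathbf{u}_{t-1}\vert\mathbf{u}_t,\mathbf{u}_0)$ and $\mathbb{P}_\theta(\mathbf{u}_{t-1}\vert\mathbf{u}_t,\mathbf{e})$; the term $\mathcal{L}_0$ is absorbed into this family via the usual convention at $t=1$ (or handled as a separate constant-scaled term).

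The central computation is the KL divergence between two Gaussian measures on $\mathcal{H}$ sharing the same covariance $\tilde\beta_t\mathbf{C}$ (after the smoothing conjugation, $\tilde\beta_t\mathbf{A}\mathbf{C}\mathbf{A}^T$). By Assumption~\ref{assumption_main:measure_equi} the mean difference lies in $(\tilde\beta_t\mathbf{C})^{1/2}(\mathcal{H})$, so by the discussion following the assumption all three Feldman--Hájek conditions hold and the two measures are equivalent; Lemma~\ref{lemma_main:radon} then gives the Radon--Nikodym derivative in closed form. I would compute $\mathrm{KL}(\mathbb{Q}\parallel\mathbb{P}) = \mathbb{E}_{\mathbb{Q}}\log(\mathrm{d}\mathbb{Q}/\mathrm{d}\mathbb{P})$ by plugging $\mathbf{f}\sim\mathbb{Q}$ into the log-derivative: the linear term $\langle \mathbf{C}^{-1/2}(\mathbf{m}_1-\mathbf{m}_2),\mathbf{C}^{-1/2}(\mathbf{f}-\mathbf{m}_2)\rangle$ has expectation $\|\mathbf{C}^{-1/2}(\mathbf{m}_1-\mathbf{m}_2)\|^2$ since $\mathbb{E}_{\mathbb{Q}}\mathbf{f}=\mathbf{m}_1$, and combining with the $-\tfrac12\|\cdot\|^2$ term yields $\mathrm{KL} = \tfrac12\|(\tilde\beta_t\mathbf{C})^{-1/2}(\mathbf{m}_1-\mathbf{m}_2)\|_\mathcal{H}^2$ — the infinite-dimensional analogue of the familiar squared-Mahalanobis expression, with no determinant term because the covariances coincide.

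With the KL in hand I would substitute $\mathbf{m}_1 = \tilde{\mathbf{m}}_t(\mathbf{u}_t,\mathbf{u}_0)$ and $\mathbf{m}_2 = \mathbf{m}_\theta(\mathbf{u}_t,\mathbf{e},t)$, then perform the DDPM reparameterization: using $\mathbf{u}_t = \sqrt{\bar\alpha_t}\mathbf{A}\mathbf{u}_0 + \sqrt{1-\bar\alpha_t}\mathbf{A}\bm\xi$ with $\bm\xi\sim\mathcal{N}(\mathbf{0},\mathbf{C})$, solve for $\mathbf{u}_0$ in terms of $\mathbf{u}_t$ and $\bm\xi$, plug into the expression for $\tilde{\mathbf{m}}_t$ from the preliminaries, and reparameterize $\mathbf{m}_\theta$ in the same affine form but with $\bm\xi$ replaced by $\bm\xi_\theta(\mathbf{u}_t,\mathbf{e},t)$. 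The mean difference then collapses to a scalar multiple of $\mathbf{A}(\bm\xi - \bm\xi_\theta)$ — the $\mathbf{A}$ appearing because the smoothing operator was conjugated through the covariance — and collecting the scalar prefactors from $\tilde{\mathbf{m}}_t$ together with the $1/2\tilde\beta_t$ from the KL and the $1/(1-\bar\alpha_t)$-type factors from the reparameterization produces exactly $\lambda_t = \beta_t^2/\big(2\tilde\beta_t(1-\beta_t)(1-\bar\alpha_t)\big)$. Taking expectations over $t$ and $\mathbf{u}_0\sim\mathbb{Q}_{\mathrm{data}}$ and dropping $\theta$-independent constants gives the claimed objective.

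The main obstacle is purely in the careful handling of the smoothing operator $\mathbf{A}$ and the unbounded operator $\mathbf{C}^{-1/2}$: one must check that $\mathbf{A}(\bm\xi-\bm\xi_\theta)$ genuinely lies in the Cameron--Martin space $(\tilde\beta_t\mathbf{C})^{1/2}(\mathcal{H})$ so that $\|\mathbf{C}^{-1/2}\mathbf{A}(\bm\xi-\bm\xi_\theta)\|_\mathcal{H}$ is finite (this is exactly what Assumption~\ref{assumption_main:measure_equi} is engineered to guarantee), and that the covariance conjugation $\mathbf{A}\mathbf{C}\mathbf{A}^T$ is consistent between $\mathbb{Q}$ and $\mathbb{P}_\theta$ so the no-determinant form of the KL applies. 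Everything else is a direct transcription of the finite-dimensional DDPM algebra, so I would state those steps briefly and refer to the appendix for the bookkeeping of constants.
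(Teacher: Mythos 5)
Your proposal follows essentially the same route as the paper's proof: establish equivalence of the two Gaussian measures via Assumption~\ref{assumption_main:measure_equi} and the Feldman--H\'ajek conditions, apply the Radon--Nikodym derivative of Lemma~\ref{lemma_main:radon} to obtain $\mathrm{KL}=\tfrac12\left\Vert(\Tilde{\beta}_t\mathbf{C})^{-1/2}(\mathbf{m}_1-\mathbf{m}_2)\right\Vert^2_{\mathcal{H}}$ (the paper justifies the expectation of the linear term by a spectral decomposition in the eigenbasis of $\mathbf{C}$, which is the rigorous version of your $\mathbb{E}_{\mathbb{Q}}\mathbf{f}=\mathbf{m}_1$ step), and then carry out the DDPM reparameterization to collect the prefactor $\lambda_t$. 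The only cosmetic slip is that the paper parameterizes $\bm{\xi}_\theta$ to predict the smoothed noise directly, so the mean difference is proportional to $\mathbf{A}\bm{\xi}-\bm{\xi}_\theta$ rather than $\mathbf{A}(\bm{\xi}-\bm{\xi}_\theta)$, which is what matches the theorem statement.
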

\begin{proof}
    Please refer to the Appendix~\ref{appendix:proof} for the full proof. \qed
\end{proof}
\subsection{Conditional Denoiser with Cross-Attention Neural Operators}
\begin{figure}[!t]
    \centering
    \includegraphics[width=\linewidth]{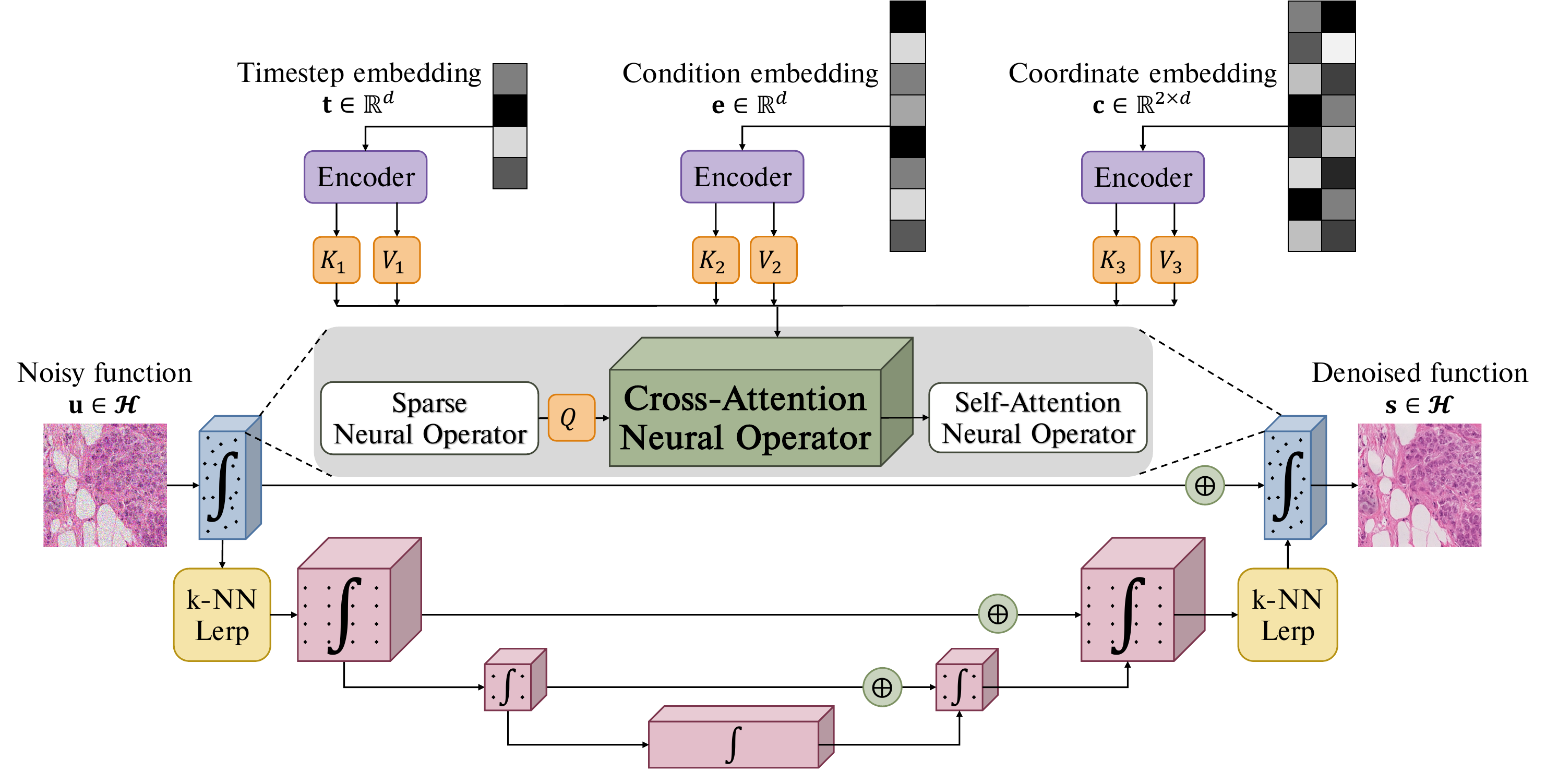}
    \caption{Given a noisy function $\mathbf{u} \in \mathcal{H}$, we discretize it by randomly selecting a subset of coordinates $\mathbf{x} = \{\mathbf{x}^{(i)}\}_{1 \le i \le N} \subset \mathcal{X}$ then feed it into our conditional denoiser returning a denoised function $\mathbf{s} \in \mathcal{H}$. The conditional denoiser architecture of \texttt{$\infty$-Brush} includes a sparse level and a grid level. The sparse level (in blue) utilizes a sparse neural operator, a cross-attention neural operator, and a self-attention neural operator, focusing on capturing fine-grained details. The grid level (in pink) targets global information. We use k-NN linear interpolation to transform the sparse points to a regularly spaced grid.}
    \label{fig:denoiser}
\end{figure}

Our \texttt{$\infty$-Brush} utilizes a hierarchical denoiser architecture including a \textit{sparse level} for efficiently capturing fine-grained details and a \textit{grid level} for global information (Fig.~\ref{fig:denoiser}). We discretize the noisy functions $\mathbf{u} \in \mathcal{H}$ and denoised functions $\mathbf{s} \in \mathcal{H}$ by randomly selecting a subset of coordinates $\mathbf{x} = \{\mathbf{x}^{(i)}\}_{1 \le i \le N} \subset \mathcal{X}$. At the sparse level, we successively apply a sparse neural operator, our cross-attention neural operator, and self-attention on pointwise evaluations of the function. 

The computational complexity of the vanilla attention is quadratic $\mathcal{O}(N^2d)$ w.r.t. the sequence length, or number of function samples (in this case $N$), and linear w.r.t. their dimension $d$. For learning operators in infinite dimensions, $N$ could go up to millions of points (\eg when generating $4096 \times 4096$ images, $N \approx 16$ million points). We address that problem by proposing a cross-attention neural operator of linear complexity with respect to $N$.

Specifically in the cross-attention neural operator, suppose we have $L$ conditional embeddings $\{Y_l \in \mathbb{R}^{N_l \times d}\}_{1 \le l \le L}$. In our \texttt{$\infty$-Brush}, $L=3$ representing the diffusion timestep embedding $\mathbf{t}$, condition embedding $\mathbf{e}$, and coordinate embedding $\mathbf{c}$. First, we compute the queries $Q = (\mathbf{q}_i)$, keys $K_l = (\mathbf{k}_i^l) = Y_lW_k$, and values $V_l = (\mathbf{v}_i^l) = Y_lW_v$, then normalize all $\mathbf{q}_i$ and $\mathbf{k}_i$ to be $\Tilde{\mathbf{q}}_i = \texttt{softmax}(\mathbf{q}_i)$ and $\Tilde{\mathbf{k}}_i = \texttt{softmax}(\mathbf{k}_i)$. Finally, cross-attention is
\begin{equation}
    \mathbf{z}_l = \tilde{\mathbf{q}}_t+\frac{1}{L} \sum_{l=1}^L \sum_{i=1}^{N_l} \alpha_t^l\left(\tilde{\mathbf{q}}_t \cdot \tilde{\mathbf{k}}_i^l\right) \mathbf{v}_i^l =  \tilde{\mathbf{q}}_t+\frac{1}{L} \sum_{l=1}^L \alpha_t^l \tilde{\mathbf{q}}_t \cdot\left(\sum_{i=1}^{N_l} \tilde{\mathbf{k}}_i^l \odot \mathbf{v}_i^l\right),
    \label{eq:CANO}
\end{equation}
where $\alpha^l_t = 1/\sum_{j = 1}^{N_l} \Tilde{\mathbf{q}}_t \cdot \Tilde{\mathbf{k}}_j$ is the normalization coefficient. The key difference with vanilla attention is that we first multiply pointwise vectors $\tilde{\mathbf{k}}_i^l$ and $ \mathbf{v}_i^l$, and compute the dot product with $\tilde{\mathbf{q}}_t $ afterward. Hence, the complexity of Eq.~\ref{eq:CANO} is $\mathcal{O}((N + \sum_l N_l)d^2)$, which is linear w.r.t. the number of points $N$.

The output of the sparse level is linearly interpolated to a regularly spaced grid using k-Nearest Neighbors, which is the input to the grid-level model. The grid data points are passed to a grid-based, finite-dimensional UNO architecture ~\cite{lim2023scorebased, bond2023infty} that is utilized to aggregate global information. The UNO architecture is based on the widespread UNet model, which has been widely studied to condition finite-dimensional diffusion models \cite{rombach2022high}. Following that literature, we use the vanilla cross-attention at the bottleneck of the UNet denoiser to integrate the conditional information at the grid level. In the experiments, we show that since the coarse interpolation at the grid level is not a complete representation of the function, the conditioning needs to be applied to both finite-dimensional (grid) and infinite-dimensional (sparse) levels to attain high-quality results.


\section{Experiments}
\subsection{Experimental Settings}
\textbf{Datasets.} We utilize the CelebA-HQ dataset \cite{karras2018progressive} as a testbed for our cross-attention neural operator. We use $30,000$ images at $1024 \times 1024$ resolution along with the facial attribute \textit{blonde/non-blonde hair} to train our conditional diffusion model and compare with the unconditional version \cite{bond2023infty}.

For large image datasets, following the evaluation of \cite{graikos2023learned}, we utilize digital histopathology images from The Genome Cancer Atlas (TCGA) \cite{cancer2013cancer} and satellite imagery from the National Agriculture Imagery Program (NAIP) \cite{naip}. We use the BRCA subset of TCGA which contains breast cancer histopathology images. We select $20\times$ patches of sizes $4096 \times 4096$ and $1024\times 1024$, equivalent in scope to patches from $1.25\times$ and $5\times$ magnifications, respectively. To provide conditioning, we resize these images to $256 \times 256$ and extract embeddings from Quilt \cite{ikezogwo2023quilt}.

We utilize the NAIP images from the Chesapeake Land Cover dataset \cite{robinson2019large}, by extracting $1024 \times 1024$ non-overlapping patches, resulting in $35,000$ satellite images. We train a Vision Transformer (ViT-B/16) \cite{dosovitskiy2020image} on the resized $224 \times 224$ pixel versions of these images using the self-supervised DINO algorithm \cite{caron2021emerging}. We extract the learned DINO embeddings to train our conditional diffusion model on pairs of $1024 \times 1024$ images and corresponding SSL embeddings.

\noindent\textbf{Evaluation metrics.} Following the standard evaluation metrics of MultiDiffusion \cite{bar-tal2023multidiffusion} and \cite{graikos2023learned} for large image synthesis, we evaluate our method's image quality on both global structure and fine detail via FID scores \cite{heusel2017gans} using the Clean-FID implementation \cite{parmar2021cleanfid}. For global structure, we calculate CLIP FID \cite{radford2021learning} between the resized version of generated large images and real images. For fine detail, we randomly take $256 \times 256$ crops from both synthesized and real large images and measure FID (Crop FID) between the two sets of patches.

\subsection{Implementation Details}
We train our \texttt{$\infty$-Brush} from scratch for all experiments. At each training iteration, we randomly select a subset of $256^*256 = 65536$ pixels from the image. Regarding the denoiser architecture, we leverage the implementation of the Sparse Neural Operator \cite{kovachi2024neuraloperator}, the unconditional UNO \cite{bond2023infty}, and the general neural operator \cite{hao2023gnot}. For faster runtime and memory efficiency, we implement our cross-attention neural operator using FlashAttention-2 \cite{dao2024flashattention}. The model is trained using the Adam optimizer with a learning rate of $5e-5$ and $\beta_1 = 0.9$, $\beta_2 = 0.99$, along with an exponential moving average (EMA) rate of $0.995$. During inference, we apply DDIM \cite{song2020denoising} with $50$ steps for all experiments. All \texttt{$\infty$-Brush} models were trained on $4$ NVIDIA A100 GPUs, with a batch size of $20$ per GPU.

\subsection{Experimental Results}
\begin{figure}[!t]
    \centering
    \includegraphics[width=\linewidth]{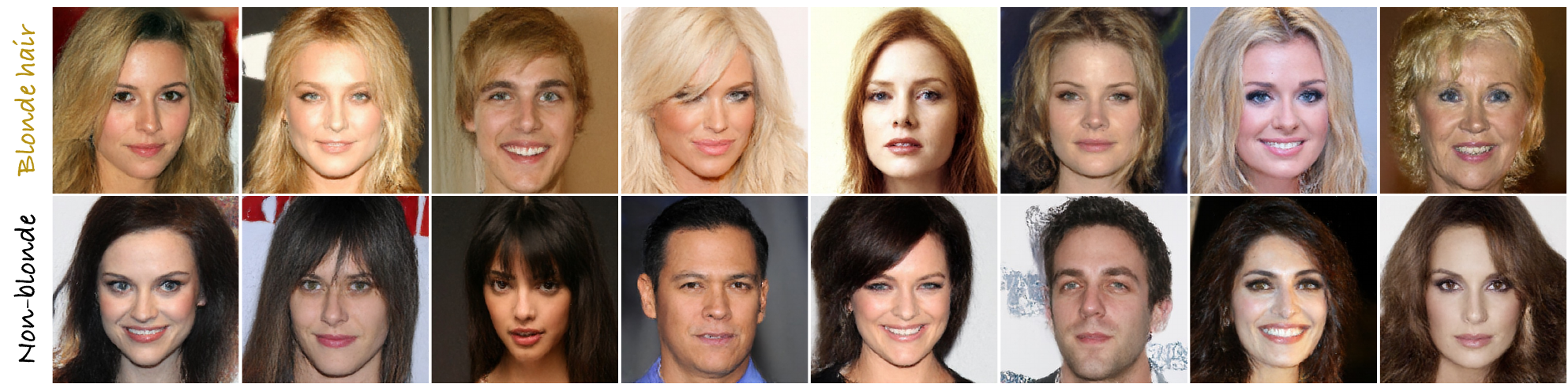}
    \caption{Large images ($1024 \times 1024$) generated from our \texttt{$\infty$-Brush}, conditioned on the facial attribute \textit{blonde/non-blonde} hair.}
    \label{fig:celebahq}
\end{figure}

\begin{table}[!t]
\caption{The CLIP FID scores of our \texttt{$\infty$-Brush} model against $\infty$-Diff showcases our model's capability in conditionally generating celebrity faces on the CelebA-HQ dataset based on the facial attribute of hair color (blonde vs. non-blonde).}
\centering
\footnotesize
\resizebox{1\linewidth}{!}{
\begin{tabular}{ccc|c|c}
\toprule
 \textbf{Dataset} & \textbf{$\#$ Images} & \textbf{Method} & \textbf{Training Config.} & \textbf{CLIP FID} \\ 
 \midrule
 \multirow{2}{*}{\makecell{CelebA-HQ \\ ($1024 \times 1024$)}} & \multirow{2}{*}{$30$k} & $\infty$-Diff \cite{bond2023infty} &  Unconditional & 9.44 \\
\cmidrule{3-5}
 &  & \texttt{$\infty$-Brush} & blonde vs. non-blonde hair & \textbf{8.38} \\
\bottomrule
\end{tabular}
}
\label{table:conditional}
\end{table}

\textbf{Facial Attribute Conditional Generation.} 
We first validate our cross-attention neural operator as an efficient conditioning mechanism for infinite-dimensional diffusion models by adding control to the generation of CelebA-HQ images.

We synthesize $3,000$ images, maintaining the same ratio of blonde/non-blonde as in the entire dataset, and calculate the CLIP FID to assess quality. We compare between our conditional \texttt{$\infty$-Brush} and the unconditional $\infty$-Diff \cite{bond2023infty}. As shown in Table~\ref{table:conditional}, our method outperforms the unconditional model, while also allowing us to control the attribute used as conditioning. Figure~\ref{fig:celebahq} shows examples of large images ($1024 \times 1024$) generated from \texttt{$\infty$-Brush}, conditioned on the \textit{blonde/non-blonde} attribute.

\noindent\textbf{Controllable (Very) Large Image Generation.} We provide experimental results of controllable generation of large ($1024 \times 1024$) and very large ($4096 \times 4096$) images and compare to conditional diffusion models in finite dimensions \cite{podell2023sdxl} and a patch-based approach \cite{graikos2023learned}. In addition, we perform an ablation study to evaluate the significance of our cross-attention neural operator. We further compare the computing resources required for the three different model categories.

Our very large image experiments on the TCGA-BRCA dataset, which has $57$k image patches at a resolution of $4096 \times 4096$ pixels, reveal that \texttt{$\infty$-Brush} excels at capturing the global structure of images, as indicated by the better CLIP FID score (Table~\ref{table:very_large_image}). The model's performance in finer details, reflected in the Crop FID, is slightly worse than the patch-based approach. \texttt{$\infty$-Brush} is trained on batches of just $65536 \approx 0.4\%$ of the pixels from the full images, offering a substantial reduction in complexity, while maintaining both global structure and finer details; see Fig.~\ref{fig_main:large_image} for qualitative results of generated $4096\times 4096$ images.

\begin{figure}[!t]
    \centering
    \includegraphics[width=\linewidth]{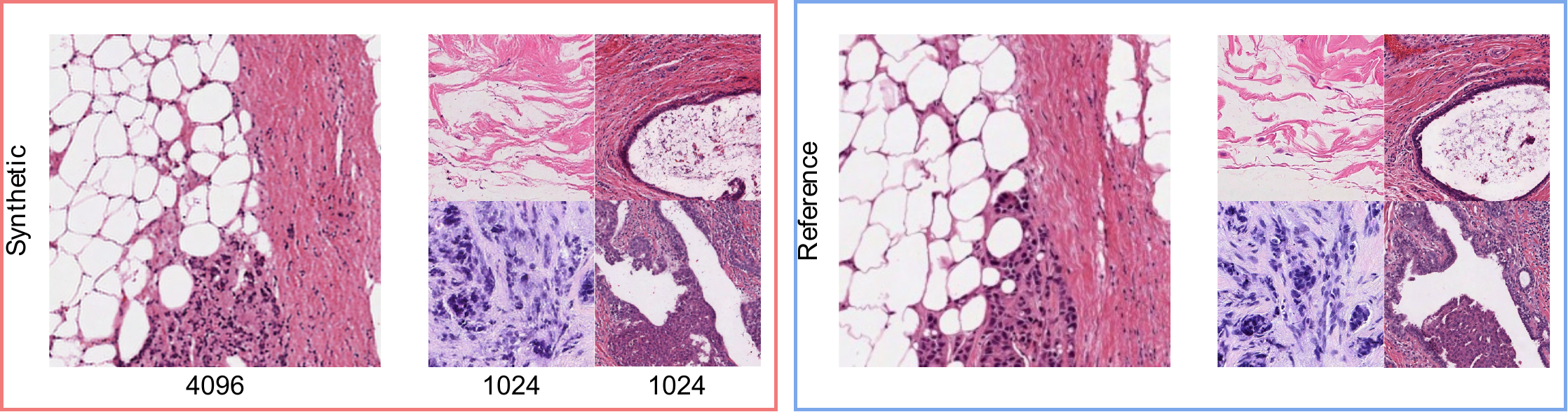}
    \caption{Very large ($4096 \times 4096$) and large ($1024 \times 1024$) images generated from \texttt{$\infty$-Brush}, and the corresponding reference real images used to generate them. Given a single embedding vector of a downsampled $256\times256$ real image, \texttt{$\infty$-Brush} can synthesize images of up to $4096 \times 4096$ and preserve global structures of the reference image.}
    \label{fig_main:large_image}
\end{figure}

\begin{table}[!t]
\centering
\caption{Performance on controllable very large image synthesis on TCGA-BRCA dataset at $4096 \times 4096$ resolution. \texttt{$\infty$-Brush} outperforms the patch-based approach \cite{graikos2023learned} in terms of global structure (CLIP FID) while achieving acceptable local details (Crop FID). SDXL \cite{podell2023sdxl} cannot be trained directly on images of $4096 \times 4096$ images. Additionally, an ablation study on the cross-attention neural operator shows improvement in FID metrics when the proposed mechanism is used. This emphasizes its critical role in the model's ability to synthesize high-resolution images effectively.}
\footnotesize
\resizebox{1\linewidth}{!}{
\begin{tabular}{ccc|c|c|c}
\toprule
 \textbf{Dataset} & \textbf{$\#$ Images} & \textbf{Method} & \textbf{Training Config.} & \textbf{CLIP FID} & \textbf{Crop FID} \\ 
 \midrule
 \multirow{4}{*}{\makecell{BRCA $1.25\times$ \\ ($4096 \times 4096$)}} & \multirow{4}{*}{$57$k} & Graikos \etal~\cite{graikos2023learned} & \makecell{$976$k patches of \\ $1024 \times 1024$}& 2.75 & \textbf{11.30} \\
\cmidrule{3-6}
 &  &  \texttt{$\infty$-Brush} & \multirow{3}{*}{\makecell{$256 ^* 256$ pixels of \\ $57$k full-size images}} & \textbf{2.63} & 14.76 \\
 \cmidrule{3-3} \cmidrule{5-6}
 &  &  \texttt{$\infty$-Brush} &  & \multirow{2}{*}{3.81} & \multirow{2}{*}{16.28} \\
 &  &  \ding{55} Cross-attention neural operator &  & & \\
\bottomrule
\end{tabular}
}
\label{table:very_large_image}
\end{table}

We can also partially attribute our better CLIP FID and worse Crop FID to the different conditioning provided to our model. The $4096 \times 4096$ images are first downsampled to $256\times256$ and a single embedding vector is extracted to capture information from the entire image. In comparison, the patch-based approach of \cite{graikos2023learned} employs $16$ local conditions that describe each of the $16$ patches that form the image, helping the model to focus more on the local appearance.

SDXL \cite{podell2023sdxl} would need to be trained on $4096 \times 4096$ images which is infeasible with most current hardware setups. Thus, we resort to smaller patches from TCGA-BRCA and NAIP, at $1024\times1024$ resolution, to compare with SDXL and the patch-based method of \cite{graikos2023learned}. Table~\ref{table:large_image} shows that our model attains strong global structure fidelity with superior CLIP FID scores, particularly on the BRCA dataset ($3.74$ vs. $6.64$). Despite this, our approach results in higher Crop FID scores, suggesting a trade-off in capturing fine details. While SDXL is able to train at this resolution, it is important to note that \texttt{$\infty$-Brush} achieves similar performance with only a subset of the pixels, because of its highly efficient training process. Fig.~\ref{fig_main:large_image} showcases qualitative results of generated $1024\times 1024$ images from our model. Note that SDXL was pre-trained on LAION-5B, a $5$ billion image caption pair dataset \cite{schuhmann2022laion}, whereas our model was trained from scratch.


\begin{table}[!t]
\centering
\footnotesize
\caption{Performance on controllable large image synthesis on BRCA $5\times$ and NAIP dataset at $1024 \times 1024$ resolution. \texttt{$\infty$-Brush} outperforms other methods in global structure accuracy, with a marginal trade-off in fine detail as reflected in Crop FID.}
\resizebox{1\linewidth}{!}{
\begin{tabular}{ccc|c|c|c}
\toprule
 \textbf{Dataset} & \textbf{$\#$ Images} & \textbf{Method} & \textbf{Training Config.} & \textbf{CLIP FID} & \textbf{Crop FID} \\ 
 \midrule
 \multirow{5}{*}{\makecell{BRCA $5\times$ \\ ($1024 \times 1024$)}} & \multirow{5}{*}{$976$k} & SDXL \cite{podell2023sdxl} &  $976$k full-size images & 6.64 & \textbf{6.98} \\
 \cmidrule{3-6}
  &  & Graikos \etal~\cite{graikos2023learned} & \makecell{$15$M patches of \\ $256 \times 256$}& 7.43 & 15.51 \\
\cmidrule{3-6}
 &  &  \texttt{$\infty$-Brush} & \makecell{$256 ^* 256$ pixels of \\ $976$k full-size images} & \textbf{3.74} & 17.87 \\
\midrule
 \multirow{5}{*}{\makecell{NAIP \\ ($1024 \times 1024$)}} & \multirow{5}{*}{$35$k} & SDXL \cite{podell2023sdxl} &  $35$k full-size images & 10.90 & \textbf{11.50} \\
 \cmidrule{3-6}
  &  & Graikos \etal~\cite{graikos2023learned} & \makecell{$667$k patches of \\ $256 \times 256$}& 6.86 & 43.76 \\
\cmidrule{3-6}
 &  &  \texttt{$\infty$-Brush} & \makecell{$256 ^* 256$ pixels of \\ $35$k full-size images} & \textbf{6.32} & 48.65 \\
 \bottomrule
\end{tabular}
}
\label{table:large_image}
\end{table}

\noindent\textbf{Effectiveness of the Cross-Attention Neural Operator.} We evaluate the cross-attention neural operator's advantage in the \texttt{$\infty$-Brush} model by comparing its performance on the TCGA-BRCA $4096 \times 4096$ dataset with and without this operator. When the neural operator is removed, we use vanilla cross-attention between the conditioning vector and the UNet's bottleneck layer. In Table~\ref{table:very_large_image}, we observe a significant improvement in both CLIP FID and Crop FID scores when the cross-attention neural operator is employed. The improved scores affirm the operator's usefulness in synthesizing high-resolution $4096\times4096$ images. The vanilla cross-attention only applies conditioning on a regular grid of coordinates, which cannot capture fine details between coarse grid points. 

\noindent\textbf{Computing Resource Evaluation.} We analyze the computing resources needed for training various image generation models on a single NVIDIA A100 40GB GPU. As detailed in Table~\ref{table:computing}, the training time and memory requirements for diffusion models in finite dimensions, such as SDXL, increase substantially when scaling from $1024 \times 1024$ to $4096 \times 4096$, making training infeasible on standard hardware. Conversely, the patch-based approach, while able to train at higher resolutions by dividing images into smaller patches, exhibits a parameter increase and reduced batch size. Our conditional diffusion model in function space maintains a consistent maximum batch size, significantly lower parameter count, and per epoch training time across resolutions ($12$ hours vs. $300$ hours and $140$ hours), demonstrating our method's superior scalability to sizes beyond the reach of existing methods.

\begin{table}[!t]
\centering
\caption{Computing resources requirements for different diffusion models. our \texttt{$\infty$-Brush} maintains a constant parameter count and batch size across resolutions, highlighting its efficiency and scalability for controllable large image generation.}
\footnotesize
\resizebox{1\linewidth}{!}{
\begin{tabular}{l|c|c|c|c|c}
\toprule
  \textbf{Method} & $\#$ Params. & \multicolumn{2}{c|}{Training at $1024\times 1024$} & \multicolumn{2}{c}{Training at $4096\times 4096$} \\ 
& & Max batch size & Epoch time &  Max batch size & Epoch time \\
\midrule
  SDXL \cite{podell2023sdxl} & $3.5$B &$4$ & $140$ hr & O.O.M & \makecell{$1000$ hr (estimated) \\ currently infeasible } \\
  \midrule
 Graikos \etal~\cite{graikos2023learned} & $860$M & $100$ & $45$ hr & $4$ & $300$ hr\\
 \midrule
 \texttt{$\infty$-Brush} & $450$M & $20$ & $12$ hr & $20$ & $12$ hr \\
\bottomrule
\end{tabular}
}
\label{table:computing}
\end{table}

\section{Limitations}
Although \texttt{$\infty$-Brush} images exhibit better global structure consistency and maintain a degree of fine detail, they are not better than other methods in terms of local details.
We highlight a few key reasons which we hypothesize hinder our model's performance.
Our model has the smallest parameter count, with just half the parameters of the model of \cite{graikos2023learned}. We expect model sizes to scale as more works focus on infinite-dimensional diffusion models and performance to increase, as was observed in regular, finite diffusion models. Additionally, both SDXL and \cite{graikos2023learned} utilize pre-trained models as initialization, whereas, ours is trained from scratch as no infinite-dimensional pre-trained models are available, leading to worse performance in smaller datasets.

\section{Conclusion}
In conclusion, \texttt{$\infty$-Brush} presents a necessary leap forward in the domain of conditional large image generation, particularly for applications demanding high-resolution and domain-specific conditional generation. This paper has demonstrated that our approach effectively addresses the scalability limitations inherent in previous diffusion models while retaining a high degree of control over the generated output. By proposing a novel conditional diffusion model in function space, complemented by a cross-attention neural operator, we achieve not only state-of-the-art fidelity in the global structure of the images but also maintain acceptable detail in higher-resolution images without the excessive computational costs typically associated with such tasks. In future work, we plan to design local neural operators to capture fine details and transfer knowledge from finite-dimensional diffusion models for powerful initialization.

\section*{Acknowledgments}
This research was partially supported by NCI awards 1R21CA258493-01A1, 5U24CA215109, UH3CA225021,  U24CA180924, NSF grants IIS-2123920, IIS-2212046, Stony Brook Profund 2022 seed funding, and generous support from Bob Beals and Betsy Barton.

\appendix


\section{Conditional Diffusion Models in Function Space} 
\label{appendix:proof}
\label{sec:conditional}
\textbf{Forward process.} The forward process of a conditional diffusion model in function space is defined as a discrete-time Markov chain that incrementally perturbs probability measure $\mathbb{Q}_{\mathrm{data}}$ towards a Gaussian measure $\mathcal{N}(\mathbf{m}, \mathbf{C})$ with a zero mean and a specified covariance operator $\mathbf{C}$. It is a time-indexed process where each step $\mathbf{u}_t$ is obtained by applying a transformation to the previous step $\mathbf{u}_{t-1}$, which involves a scaling factor $\sqrt{1-\beta_t}\mathbf{u}_{t-1}$ related to the variance schedule $\beta$, and adding scaled Gaussian noise $\sqrt{\beta_t}\bm{\xi}_t$ with $\bm{\xi}_t \sim \mathcal{N}(\mathbf{0}, \mathbf{C})$:
\begin{equation}
    \mathbf{u}_t = \sqrt{1-\beta_t}\mathbf{u}_{t-1} + \sqrt{\beta_t}\bm{\xi}_t \quad\quad t=1,2,\dots,T.
\end{equation}
Similar to diffusion models in finite dimensions, the forward process in function space also admits sampling $\mathbf{u}_t$ at an arbitrary timestep $t$ in closed form. For $\Bar{\alpha}_t = \prod_{i=1}^t(1-\beta_t)$, we have:
\begin{equation}
\begin{split}
    \mathbf{u}_t &= \sqrt{1-\beta_t}\mathbf{u}_{t-1} + \sqrt{\beta_t}\bm{\xi}_t \quad ; \textrm{where } \bm{\xi}_t, \bm{\xi}_{t-1},\dots \sim \mathcal{N}(\mathbf{0}, \mathbf{C}) \\
 &= \sqrt{(1-\beta_t)(1-\beta_{t-1})}\mathbf{u}_{t-2} + \sqrt{(1-\beta_t)\beta_{t-1}}\bm{\xi}_{t-1} + \sqrt{\beta_t}\bm{\xi}_t \quad \\
 &= \sqrt{(1-\beta_t)(1-\beta_{t-1})}\mathbf{u}_{t-2} + \sqrt{1 - \alpha_t\alpha_{t-1}}\Bar{\bm{\xi}}_{t-1} \quad \\
 &= \dots \\
 &= \sqrt{\Bar{\alpha}_t} \mathbf{u}_{0} + \sqrt{1 - \Bar{\alpha}_t}\bm{\xi}
\end{split}
\end{equation}

Based on the above analysis, we obtain:
\begin{equation}
    \mathbb{Q}\left(\mathbf{u}_t\vert\mathbf{u}_0\right) = \mathcal{N}\left(\mathbf{u}_t; \sqrt{\Bar{\alpha}_t}\mathbf{u}_0, (1-\Bar{\alpha}_t)\mathbf{C}\right).
\end{equation}

In the context of image generation, we discretize the function $\mathbf{u}_j$ on the mesh $\mathbf{x}_j = \{\mathbf{x}_j^{(i)}\}_{1 \le i \le N} \subset \mathcal{X}$ by sampling $N$ coordinates of each image, which results in a non-smooth input space. To achieve a smoother function representation, a smoothing operator \cite{hoogeboom2023blurring, rissanen2023generative} $\mathbf{A}: \mathcal{H} \rightarrow \mathcal{H}$, \eg a truncated Gaussian kernel, is applied to approximate the rough inputs within the function space $\mathcal{H}$:

\begin{equation}
    \mathbb{Q}\left(\mathbf{u}_t\vert\mathbf{u}_0\right) = \mathcal{N}\left(\mathbf{u}_t; \sqrt{\Bar{\alpha}_t}\mathbf{A}\mathbf{u}_0, (1-\Bar{\alpha}_t)\mathbf{A}\mathbf{C}\mathbf{A}^T\right). 
\end{equation}

\noindent\textbf{Reverse process.} The reverse process in the diffusion model framework is achieved by iteratively denoising from the Gaussian measure $\mathcal{N}(\mathbf{m},\mathbf{C})$ back towards the probability measure $\mathbb{Q}_0 = \mathbb{Q}_{\mathrm{data}}$. We use a variational approach to approximate posterior measures with a variational family of measures on $\mathcal{H}$ and incorporate the conditional embedding $\mathbf{e}$ to control the generation process. We model the underlying posterior measure $\mathbb{Q}(\mathbf{u}_{t-1}\vert\mathbf{u}_t)$ with a conditional Gaussian measure:
\begin{equation}
    \mathbb{P}_\theta(\mathbf{u}_{t-1}\vert\mathbf{u}_t, \mathbf{e}) = \mathcal{N}\left(\mathbf{u}_{t-1}; \mathbf{m}_\theta(\mathbf{u}_t, \mathbf{e}, t), \mathbf{A}\mathbf{C}_\theta(\mathbf{u}_t, \mathbf{e}, t)\mathbf{A}^T\right).
\end{equation}

Likewise, we are able to derive a closed-form representation of the forward process posteriors, which are tractable when conditioned on $\mathbf{u}_0$:
\begin{equation}
    \mathbb{Q}\left(\mathbf{u}_{t-1}\vert\mathbf{u}_t, \mathbf{u}_0\right) = \mathcal{N}\left(\mathbf{u}_{t-1};\mathbf{\tilde{m}}_t(\mathbf{u}_t, \mathbf{u}_0), \Tilde{\beta}_t\mathbf{C}\right).
\end{equation}

Using Bayes' rule, we obtain:
\begin{equation}
\resizebox{\textwidth}{!}{$
\begin{split}
& \mathbb{Q}(\mathbf{u}_{t-1} \vert \mathbf{u}_t, \mathbf{u}_0) = \mathbb{Q}(\mathbf{u}_t \vert \mathbf{u}_{t-1}, \mathbf{u}_0) \frac{ \mathbb{Q}(\mathbf{u}_{t-1} \vert \mathbf{u}_0) }{ \mathbb{Q}(\mathbf{u}_t \vert \mathbf{u}_0) } \\
&\propto \exp \Big(-\frac{1}{2} \big(\frac{\langle\mathbf{C}^{-1}(\mathbf{u}_t - \sqrt{\alpha_t} \mathbf{u}_{t-1}), \mathbf{u}_t - \sqrt{\alpha_t} \mathbf{u}_{t-1} \rangle}{\beta_t} + \frac{\langle \mathbf{C}^{-1} (\mathbf{u}_{t-1} - \sqrt{\bar{\alpha}_{t-1}} \mathbf{A}\mathbf{u}_0), \mathbf{u}_{t-1} - \sqrt{\bar{\alpha}_{t-1}} \mathbf{A}\mathbf{u}_0 \rangle}{1-\bar{\alpha}_{t-1}} \\
& - \frac{\langle \mathbf{C}^{-1}(\mathbf{u}_t - \sqrt{\bar{\alpha}_t}\mathbf{A}\mathbf{u}_0), \mathbf{u}_t - \sqrt{\bar{\alpha}_t} \mathbf{A}\mathbf{u}_0\rangle}{1-\bar{\alpha}_t} \big) \Big) \\
&= \exp\Big( -\frac{1}{2} \big((\frac{\alpha_t}{\beta_t} + \frac{1}{1 - \bar{\alpha}_{t-1}}) \langle \mathbf{C}^{-1}\mathbf{u}_{t-1},\mathbf{u}_{t-1}\rangle - 2\langle\mathbf{C}^{-1}(\frac{\sqrt{\alpha_t}}{\beta_t}  \mathbf{u}_t + \frac{\sqrt{\bar{\alpha}_{t-1}}}{1 - \bar{\alpha}_{t-1}} \mathbf{A} \mathbf{u}_0) ,\mathbf{u}_{t-1}\rangle + C(\mathbf{u}_t, \mathbf{u}_0) \big) \Big),
\end{split}$}
\end{equation}
where $C(\mathbf{u}_t, \mathbf{u}_0)$ is some function not involving $\mathbf{u}_{t-1}$ and details are omitted. Following the standard Gaussian density function, the mean and covariance of $\mathbb{Q}(\mathbf{u}_{t-1} \vert \mathbf{u}_t, \mathbf{u}_0)$ can be parameterized as follows (recall that $\alpha_t = 1 - \beta_t$ and $\bar{\alpha}_t = \prod_{i=1}^T \alpha_i$):
\begin{equation}
\begin{split}
\mathbf{\tilde{m}}_t (\mathbf{u}_t, \mathbf{u}_0)
&= (\frac{\sqrt{\alpha_t}}{\beta_t} \mathbf{u}_t + \frac{\sqrt{\bar{\alpha}_{t-1} }}{1 - \bar{\alpha}_{t-1}} \mathbf{A}\mathbf{u}_0)/(\frac{\alpha_t}{\beta_t} + \frac{1}{1 - \bar{\alpha}_{t-1}}) \\
&= (\frac{\sqrt{\alpha_t}}{\beta_t} \mathbf{u}_t + \frac{\sqrt{\bar{\alpha}_{t-1} }}{1 - \bar{\alpha}_{t-1}} \mathbf{A} \mathbf{u}_0) \frac{1 - \bar{\alpha}_{t-1}}{1 - \bar{\alpha}_t} \cdot \beta_t \\
&= \frac{\sqrt{\Bar{\alpha}_{t-1}}\beta_t}{1-\Bar{\alpha}_t} \mathbf{A}\mathbf{u}_0 + \frac{\sqrt{1-\beta_t}(1-\Bar{\alpha}_{t-1})}{1-\Bar{\alpha}_t}\mathbf{u}_t.\\
\end{split}
\label{eq:mu_}
\end{equation}

\begin{equation}
 \tilde{\beta}_t = 1/(\frac{\alpha_t}{\beta_t} + \frac{1}{1 - \bar{\alpha}_{t-1}}) 
= 1/(\frac{\alpha_t - \bar{\alpha}_t + \beta_t}{\beta_t(1 - \bar{\alpha}_{t-1})})
= \frac{1 - \bar{\alpha}_{t-1}}{1 - \bar{\alpha}_t} \cdot \beta_t.
\end{equation}

\begin{proposition}[Learning Objective]{The cross-entropy of conditional diffusion models in function space has a variational upper bound of}
\begin{align}
    \mathcal{L}_{\mathrm{CE}} = -\mathbb{E}_\mathbb{Q}\log{\mathbb{P}_\theta(\mathbf{u}_0\vert\mathbf{e})} &\le \mathbb{E}_\mathbb{Q} \Bigg[\underbrace{\mathrm{KL}(\mathbb{Q}(\mathbf{u}_T \vert \mathbf{u}_0) \parallel \mathbb{P}_\theta(\mathbf{u}_T))}_{\mathcal{L}_T} \underbrace{-\log \mathbb{P}_\theta(\mathbf{u}_0 \vert \mathbf{u}_1, \mathbf{e})}_{\mathcal{L}_0} \notag\\
&+ \sum_{t=2}^{T} \underbrace{\mathrm{KL}(\mathbb{Q}(\mathbf{u}_{t-1} \vert \mathbf{u}_{t}, \mathbf{u}_0) \parallel \mathbb{P}_\theta(\mathbf{u}_{t-1} \vert\mathbf{u}_{t}, \mathbf{e})}_{\mathcal{L}_{t-1}}\Bigg].
\end{align}
\label{proposition:objective}
\end{proposition}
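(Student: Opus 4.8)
The plan is to mimic the standard ELBO derivation for DDPMs \cite{ho2020denoising, sohl2015deep}, but carry it out entirely at the level of probability measures on $\mathcal{H}$ rather than densities, since no Lebesgue density exists in infinite dimensions. First I would start from the cross-entropy $\mathcal{L}_{\mathrm{CE}} = -\mathbb{E}_{\mathbb{Q}}\log\mathbb{P}_\theta(\mathbf{u}_0\mid\mathbf{e})$ and introduce the latent path $\mathbf{u}_{1:T}$ by writing $\mathbb{P}_\theta(\mathbf{u}_0\mid\mathbf{e})$ as the marginal $\mathbb{E}_{\mathbb{Q}(\mathbf{u}_{1:T}\mid\mathbf{u}_0)}\!\left[\frac{\mathrm{d}\mathbb{P}_\theta(\mathbf{u}_{0:T}\mid\mathbf{e})}{\mathrm{d}\mathbb{Q}(\mathbf{u}_{1:T}\mid\mathbf{u}_0)}\right]$, where the Radon-Nikodym derivative of the joint reverse measure with respect to the forward path measure plays the role of the density ratio $p_\theta/q$ in the finite-dimensional proof. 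Applying Jensen's inequality to the concave $\log$ then gives $\mathcal{L}_{\mathrm{CE}} \le -\mathbb{E}_{\mathbb{Q}(\mathbf{u}_{0:T})}\log\frac{\mathrm{d}\mathbb{P}_\theta(\mathbf{u}_{0:T}\mid\mathbf{e})}{\mathrm{d}\mathbb{Q}(\mathbf{u}_{1:T}\mid\mathbf{u}_0)}$; this is the infinite-dimensional analogue of the usual variational bound.

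Next I would factor both the forward and reverse path measures using their Markov structure: $\mathbb{Q}(\mathbf{u}_{1:T}\mid\mathbf{u}_0) = \prod_{t\ge 1}\mathbb{Q}(\mathbf{u}_t\mid\mathbf{u}_{t-1})$ and $\mathbb{P}_\theta(\mathbf{u}_{0:T}\mid\mathbf{e}) = \mathbb{P}_\theta(\mathbf{u}_T)\prod_{t\ge 1}\mathbb{P}_\theta(\mathbf{u}_{t-1}\mid\mathbf{u}_t,\mathbf{e})$, so that the log Radon-Nikodym derivative splits into a telescoping sum of log-ratios. Then, exactly as in the DDPM derivation, I would rewrite each forward transition for $t\ge 2$ using the Bayes rule identity $\mathbb{Q}(\mathbf{u}_t\mid\mathbf{u}_{t-1}) = \mathbb{Q}(\mathbf{u}_{t-1}\mid\mathbf{u}_t,\mathbf{u}_0)\,\mathbb{Q}(\mathbf{u}_t\mid\mathbf{u}_0)/\mathbb{Q}(\mathbf{u}_{t-1}\mid\mathbf{u}_0)$ — this identity holds at the measure level because all the conditionals involved are the Gaussian measures computed earlier in the appendix (and the forward chain is Markov conditioned on $\mathbf{u}_0$). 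The product of the $\mathbb{Q}(\mathbf{u}_t\mid\mathbf{u}_0)/\mathbb{Q}(\mathbf{u}_{t-1}\mid\mathbf{u}_0)$ factors telescopes, leaving $\mathbb{Q}(\mathbf{u}_T\mid\mathbf{u}_0)/\mathbb{Q}(\mathbf{u}_1\mid\mathbf{u}_0)$. Collecting the $t=1$ term separately, the $t=T$ boundary term, and regrouping, the bound becomes $\mathbb{E}_{\mathbb{Q}}\big[\log\frac{\mathrm{d}\mathbb{Q}(\mathbf{u}_T\mid\mathbf{u}_0)}{\mathrm{d}\mathbb{P}_\theta(\mathbf{u}_T)} - \log\mathbb{P}_\theta(\mathbf{u}_0\mid\mathbf{u}_1,\mathbf{e}) + \sum_{t=2}^T\log\frac{\mathrm{d}\mathbb{Q}(\mathbf{u}_{t-1}\mid\mathbf{u}_t,\mathbf{u}_0)}{\mathrm{d}\mathbb{P}_\theta(\mathbf{u}_{t-1}\mid\mathbf{u}_t,\mathbf{e})}\big]$. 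Finally I would take the inner expectation over $\mathbf{u}_T\mid\mathbf{u}_0$ in the first term and over $\mathbf{u}_{t-1}\mid(\mathbf{u}_t,\mathbf{u}_0)$ in the summands, recognizing each as a KL divergence between measures on $\mathcal{H}$, which yields $\mathcal{L}_T + \mathcal{L}_0 + \sum_{t=2}^T\mathcal{L}_{t-1}$ as stated.

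The main obstacle — and the place where genuine care beyond the finite-dimensional template is needed — is justifying that all the manipulations involving Radon-Nikodym derivatives are well-defined: one must check that the relevant pairs of Gaussian measures are mutually absolutely continuous (so the derivatives exist), that Fubini/Tonelli applies to interchange the $\log$-ratio decomposition with the path expectation, and that the telescoping product identity for the forward conditionals is valid as an identity of densities $\mathbb{Q}$-almost everywhere rather than merely formally. Equivalence of the forward Gaussians is handled by the same Feldman-Hájek reasoning used for Lemma~\ref{lemma_main:measure_equi} — all forward conditionals share the covariance $\mathbf{C}$ (up to the scalar $\tilde\beta_t$ and the $\mathbf{A}$ conjugation), and the mean differences lie in $\mathbf{C}^{1/2}(\mathcal{H})$ because they are finite linear combinations of $\mathbf{A}\mathbf{u}_0$ and Gaussian draws, which live in the Cameron-Martin space almost surely in the relevant sense; equivalence of $\mathbb{Q}(\mathbf{u}_{t-1}\mid\mathbf{u}_t,\mathbf{u}_0)$ and $\mathbb{P}_\theta(\mathbf{u}_{t-1}\mid\mathbf{u}_t,\mathbf{e})$ is precisely Assumption~\ref{assumption_main:measure_equi}. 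I expect the remainder — the algebraic telescoping and the bookkeeping of boundary terms — to be entirely routine and identical in form to the DDPM case, so the write-up can reasonably defer those details and emphasize only the measure-theoretic well-definedness. Throughout, I would keep in mind that the $\mathcal{L}_T$ term does not depend on $\theta$ (or only through a fixed prior $\mathbb{P}_\theta(\mathbf{u}_T)=\mathcal{N}(\mathbf{m},\mathbf{C})$) and is therefore a constant in the subsequent optimization, matching how it is dropped in the statement of the main theorem.
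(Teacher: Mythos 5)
Your proposal follows essentially the same route as the paper's own proof: introduce the latent path $\mathbf{u}_{1:T}$, apply Jensen's inequality to obtain the variational upper bound, factor both Markov chains, rewrite each forward transition via the Bayes identity $\mathbb{Q}(\mathbf{u}_t\mid\mathbf{u}_{t-1}) = \mathbb{Q}(\mathbf{u}_{t-1}\mid\mathbf{u}_t,\mathbf{u}_0)\,\mathbb{Q}(\mathbf{u}_t\mid\mathbf{u}_0)/\mathbb{Q}(\mathbf{u}_{t-1}\mid\mathbf{u}_0)$, telescope, and regroup into $\mathcal{L}_T$, $\mathcal{L}_0$, and the KL summands. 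The only difference is that you are more explicit than the paper about phrasing the ratios as Radon--Nikodym derivatives and flagging the equivalence-of-measures conditions needed to make them well defined, which is a point in your favor but does not change the argument.
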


\begin{proof}
The conditional diffusion model in function space is trained to minimize the cross entropy as the learning objective, which is equivalent to minimizing variational upper bound (VUB): 
\begin{equation}
\begin{split}
L_\text{CE} &= - \mathbb{E}_{\mathbb{Q}(\mathbf{u}_0\vert \mathbf{e})} \log \mathbb{P}_\theta(\mathbf{u}_0\vert \mathbf{e}) \\
&= - \mathbb{E}_{\mathbb{Q}(\mathbf{u}_0\vert \mathbf{e})} \log \Big( \int \mathbb{P}_\theta(\mathbf{u}_{0:T}\vert \mathbf{e}) d\mathbf{u}_{1:T} \Big) \\
&= - \mathbb{E}_{\mathbb{Q}(\mathbf{u}_0\vert \mathbf{e})} \log \Big( \int \mathbb{Q}(\mathbf{u}_{1:T} \vert \mathbf{u}_0, \mathbf{e}) \frac{\mathbb{P}_\theta(\mathbf{u}_{0:T}\vert \mathbf{e})}{\mathbb{Q}(\mathbf{u}_{1:T} \vert \mathbf{u}_{0}, \mathbf{e})} d\mathbf{u}_{1:T} \Big) \\
&= - \mathbb{E}_{\mathbb{Q}(\mathbf{u}_0\vert \mathbf{e})} \log \Big( \mathbb{E}_{\mathbb{Q}(\mathbf{u}_{1:T} \vert \mathbf{u}_0, \mathbf{e})} \frac{\mathbb{P}_\theta(\mathbf{u}_{0:T}\vert \mathbf{e})}{\mathbb{Q}(\mathbf{u}_{1:T} \vert \mathbf{u}_{0}, \mathbf{e})} \Big) \\
&\leq - \mathbb{E}_{\mathbb{Q}(\mathbf{u}_{0:T}\vert \mathbf{e})} \log \frac{\mathbb{P}_\theta(\mathbf{u}_{0:T}\vert \mathbf{e})}{\mathbb{Q}(\mathbf{u}_{1:T} \vert \mathbf{u}_{0}, \mathbf{e})} \\
&= \mathbb{E}_{\mathbb{Q}(\mathbf{u}_{0:T}\vert \mathbf{e})}\Big[\log \frac{\mathbb{Q}(\mathbf{u}_{1:T} \vert \mathbf{u}_{0}, \mathbf{e})}{\mathbb{P}_\theta(\mathbf{u}_{0:T}\vert \mathbf{e})} \Big] \\
&= \mathbb{E}_{\mathbb{Q}(\mathbf{u}_{0:T}\vert \mathbf{e})}\Big[\log \frac{\mathbb{Q}(\mathbf{u}_{1:T} \vert \mathbf{u}_{0})}{\mathbb{P}_\theta(\mathbf{u}_{0:T}\vert \mathbf{e})} \Big] = L_\mathrm{VUB}.
\end{split}
\label{eq:lce}
\end{equation}

To convert each term in the equation to be analytically computable, the objective can be further rewritten to be a combination of several KL-divergence and entropy terms:
\begin{equation}
\resizebox{\textwidth}{!}{$
\begin{split}
L_\text{VUB} &= \mathbb{E}_{\mathbb{Q}(\mathbf{u}_{0:T}\vert \mathbf{e})} \Big[ \log\frac{\mathbb{Q}(\mathbf{u}_{1:T}\vert\mathbf{u}_0)}{\mathbb{P}_\theta(\mathbf{u}_{0:T}\vert \mathbf{e})} \Big] \\
&= \mathbb{E}_\mathbb{Q} \Big[ \log\frac{\prod_{t=1}^T \mathbb{Q}(\mathbf{u}_t\vert\mathbf{u}_{t-1})}{ \mathbb{P}_\theta(\mathbf{u}_T) \prod_{t=1}^T \mathbb{P}_\theta(\mathbf{u}_{t-1} \vert\mathbf{u}_t, \mathbf{e})} \Big] \\
&= \mathbb{E}_\mathbb{Q} \Big[ -\log \mathbb{P}_\theta(\mathbf{u}_T) + \sum_{t=1}^T \log \frac{\mathbb{Q}(\mathbf{u}_t\vert\mathbf{u}_{t-1})}{\mathbb{P}_\theta(\mathbf{u}_{t-1} \vert\mathbf{u}_t, \mathbf{e})} \Big] \\
&= \mathbb{E}_\mathbb{Q} \Big[ -\log \mathbb{P}_\theta(\mathbf{u}_T) + \sum_{t=2}^T \log \frac{\mathbb{Q}(\mathbf{u}_t\vert\mathbf{u}_{t-1})}{\mathbb{P}_\theta(\mathbf{u}_{t-1} \vert\mathbf{u}_t, \mathbf{e})} + \log\frac{\mathbb{Q}(\mathbf{u}_1 \vert \mathbf{u}_0)}{\mathbb{P}_\theta(\mathbf{u}_0 \vert \mathbf{u}_1, \mathbf{e})} \Big] \\
&= \mathbb{E}_\mathbb{Q} \Big[ -\log \mathbb{P}_\theta(\mathbf{u}_T) + \sum_{t=2}^T \log \Big( \frac{\mathbb{Q}(\mathbf{u}_{t-1} \vert \mathbf{u}_t, \mathbf{u}_0)}{\mathbb{P}_\theta(\mathbf{u}_{t-1} \vert\mathbf{u}_t, \mathbf{e})}\cdot \frac{\mathbb{Q}(\mathbf{u}_t \vert \mathbf{u}_0)}{\mathbb{Q}(\mathbf{u}_{t-1}\vert\mathbf{u}_0)} \Big) + \log \frac{\mathbb{Q}(\mathbf{u}_1 \vert \mathbf{u}_0)}{\mathbb{P}_\theta(\mathbf{u}_0 \vert \mathbf{u}_1, \mathbf{e})} \Big] \\
&= \mathbb{E}_\mathbb{Q} \Big[ -\log \mathbb{P}_\theta(\mathbf{u}_T) + \sum_{t=2}^T \log \frac{\mathbb{Q}(\mathbf{u}_{t-1} \vert \mathbf{u}_t, \mathbf{u}_0)}{\mathbb{P}_\theta(\mathbf{u}_{t-1} \vert\mathbf{u}_t, \mathbf{e})} + \sum_{t=2}^T \log \frac{\mathbb{Q}(\mathbf{u}_t \vert \mathbf{u}_0)}{\mathbb{Q}(\mathbf{u}_{t-1} \vert \mathbf{u}_0)} + \log\frac{\mathbb{Q}(\mathbf{u}_1 \vert \mathbf{u}_0)}{\mathbb{P}_\theta(\mathbf{u}_0 \vert \mathbf{u}_1, \mathbf{e})} \Big] \\
&= \mathbb{E}_\mathbb{Q} \Big[ -\log \mathbb{P}_\theta(\mathbf{u}_T) + \sum_{t=2}^T \log \frac{\mathbb{Q}(\mathbf{u}_{t-1} \vert \mathbf{u}_t, \mathbf{u}_0)}{\mathbb{P}_\theta(\mathbf{u}_{t-1} \vert\mathbf{u}_t, \mathbf{e})} + \log\frac{\mathbb{Q}(\mathbf{u}_T \vert \mathbf{u}_0)}{\mathbb{Q}(\mathbf{u}_1 \vert \mathbf{u}_0)} + \log \frac{\mathbb{Q}(\mathbf{u}_1 \vert \mathbf{u}_0)}{\mathbb{P}_\theta(\mathbf{u}_0 \vert \mathbf{u}_1, \mathbf{e})} \Big]\\
&= \mathbb{E}_\mathbb{Q} \Big[ \log\frac{\mathbb{Q}(\mathbf{u}_T \vert \mathbf{u}_0)}{\mathbb{P}_\theta(\mathbf{u}_T)} + \sum_{t=2}^T \log \frac{\mathbb{Q}(\mathbf{u}_{t-1} \vert \mathbf{u}_t, \mathbf{u}_0)}{\mathbb{P}_\theta(\mathbf{u}_{t-1} \vert\mathbf{u}_t, \mathbf{e})} - \log \mathbb{P}_\theta(\mathbf{u}_0 \vert \mathbf{u}_1, \mathbf{e}) \Big] \\
&= \mathbb{E}_\mathbb{Q} [\underbrace{\text{KL}(\mathbb{Q}(\mathbf{u}_T \vert \mathbf{u}_0) \parallel \mathbb{P}_\theta(\mathbf{u}_T))}_{L_T} + \sum_{t=2}^{T} \underbrace{\text{KL}(\mathbb{Q}(\mathbf{u}_{t-1} \vert \mathbf{u}_{t}, \mathbf{u}_0) \parallel \mathbb{P}_\theta(\mathbf{u}_{t-1} \vert\mathbf{u}_{t}, \mathbf{e}))}_{L_{t-1}} \underbrace{- \log \mathbb{P}_\theta(\mathbf{u}_0 \vert \mathbf{u}_1, \mathbf{e})}_{L_0} ]
\end{split}$}
\label{eq:vub}
\end{equation}
Combine Eq.~\ref{eq:lce} and Eq.~\ref{eq:vub}, we obtain:
\begin{align}
    \mathcal{L}_{\mathrm{CE}} = -\mathbb{E}_\mathbb{Q}\log{\mathbb{P}_\theta(\mathbf{u}_0\vert\mathbf{e})} &\le \mathbb{E}_\mathbb{Q} \Bigg[\underbrace{\mathrm{KL}(\mathbb{Q}(\mathbf{u}_T \vert \mathbf{u}_0) \parallel \mathbb{P}_\theta(\mathbf{u}_T))}_{\mathcal{L}_T} \underbrace{-\log \mathbb{P}_\theta(\mathbf{u}_0 \vert \mathbf{u}_1, \mathbf{e})}_{\mathcal{L}_0} \notag\\
&+ \sum_{t=2}^{T} \underbrace{\mathrm{KL}(\mathbb{Q}(\mathbf{u}_{t-1} \vert \mathbf{u}_{t}, \mathbf{u}_0) \parallel \mathbb{P}_\theta(\mathbf{u}_{t-1} \vert\mathbf{u}_{t}, \mathbf{e})}_{\mathcal{L}_{t-1}}\Bigg].
\end{align}
\qed
\end{proof}
To compute the KL divergence between probability measures $\mathrm{KL}(\mathbb{Q} \parallel \mathbb{P})$, we need to utilize a measure-theoretic definition of the KL divergence, which is stated in the following lemmas \cite{DaPrato_Zabczyk_2014}. 

\begin{lemma}[Measure Equivalence - The Feldman-Hájek Theorem]
    Let $\mathbb{Q} = \mathcal{N}(\mathbf{m}_1,\mathbf{C}_1)$ and $\mathbb{P} = \mathcal{N}(\mathbf{m}_2,\mathbf{C}_2)$ be Gaussian measures on $\mathcal{H}$. They are equivalent if and only if $(i): \mathbf{C}^{1/2}_1(\mathcal{H}) = \mathbf{C}^{1/2}_2(\mathcal{H}) = \mathcal{H}_0$, $(ii): \mathbf{m}_1 - \mathbf{m}_2 \in \mathcal{H}_0$, and $(iii):$ The operator $(\mathbf{C}^{-1/2}_1\mathbf{C}^{1/2}_2)(\mathbf{C}^{-1/2}_1\mathbf{C}^{1/2}_2)^* -\mathbf{I}$ is a Hilbert-Schmidt operator on the closure $\overline{\mathcal{H}_0}$.
\label{lemma:measure_equi}
\end{lemma}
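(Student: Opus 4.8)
The plan is to prove the Feldman--H\'ajek dichotomy and equivalence criterion by reducing to the standard reference \cite{DaPrato_Zabczyk_2014} after setting up the right functional-analytic scaffolding, rather than reproving the dichotomy from scratch (which would require Kakutani's theorem on product measures). First I would recall the \emph{dichotomy}: two Gaussian measures on a separable Hilbert space are either equivalent or mutually singular; this is the deep input and I would cite it. Given the dichotomy, the content of the lemma is the \emph{characterization} of which alternative holds, so the goal reduces to showing that conditions $(i)$--$(iii)$ are together necessary and sufficient for equivalence.

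The key steps, in order: (1) By translation, reduce to comparing $\mathcal{N}(\mathbf{0},\mathbf{C}_1)$ and $\mathcal{N}(\mathbf{m}_1-\mathbf{m}_2,\mathbf{C}_2)$; the mean shift and the covariance change can be analyzed separately because equivalence is preserved under the (invertible) affine maps that diagonalize $\mathbf{C}_1$. (2) Handle the centered case first: diagonalize $\mathbf{C}_1$ via its orthonormal eigenbasis $\{\mathbf{e}_n\}$ with eigenvalues $\{\lambda_n\}$, apply $\mathbf{C}_1^{-1/2}$ to normalize $\mathbb{Q}$ to the standard cylindrical Gaussian on $\mathcal{H}_0$, so that $\mathbb{P}$ becomes $\mathcal{N}(\mathbf{0}, \mathbf{T})$ with $\mathbf{T} = \mathbf{C}_1^{-1/2}\mathbf{C}_2\mathbf{C}_1^{-1/2}$; this is where condition $(i)$ enters, since it guarantees $\mathbf{C}_1^{-1/2}\mathbf{C}_2^{1/2}$ is a well-defined bounded operator with bounded inverse on $\mathcal{H}_0$ and hence $\mathbf{T}$ is bounded, positive, and boundedly invertible. (3) In this normalized picture, invoke the product-measure criterion: writing everything in the eigenbasis of $\mathbf{T}$, equivalence of the coordinate-wise one-dimensional Gaussians holds trivially (same or comparable variances), and by Kakutani's theorem the infinite product measures are equivalent iff $\sum_n (\mu_n - 1)^2 < \infty$ where $\{\mu_n\}$ are the eigenvalues of $\mathbf{T}$ --- which is exactly the statement that $\mathbf{T} - \mathbf{I}$, equivalently $(\mathbf{C}_1^{-1/2}\mathbf{C}_2^{1/2})(\mathbf{C}_1^{-1/2}\mathbf{C}_2^{1/2})^* - \mathbf{I}$, is Hilbert--Schmidt on $\overline{\mathcal{H}_0}$; this gives $(iii)$. (4) Then restore the mean: with equal (normalized) covariances, the Cameron--Martin theorem says $\mathcal{N}(\mathbf{0},\mathbf{C})$ and $\mathcal{N}(\mathbf{h},\mathbf{C})$ are equivalent iff $\mathbf{h}$ lies in the Cameron--Martin space $\mathbf{C}^{1/2}(\mathcal{H})$, and singular otherwise --- this yields $(ii)$. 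Combining (3) and (4) and using that equivalence is transitive gives sufficiency; necessity follows by running the same computations in reverse, using that a violation of any one condition forces the relevant Kakutani sum to diverge or the Cameron--Martin shift to leave the admissible subspace, hence mutual singularity by the dichotomy.

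I would write out step (3) as the technical heart: carefully justifying that one may pass to the eigenbasis of $\mathbf{T}$ on $\overline{\mathcal{H}_0}$, that the pushed-forward measures are genuine countable products of one-dimensional Gaussians, and that the Hellinger-integral computation underlying Kakutani's theorem converges exactly under the Hilbert--Schmidt condition. The mean-shift step (4) is standard Cameron--Martin and I would state it with a one-line reminder of the Radon--Nikodym density (which also foreshadows Lemma~\ref{lemma_main:radon}).

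The main obstacle I anticipate is bookkeeping around the subspace $\mathcal{H}_0 = \mathbf{C}_1^{1/2}(\mathcal{H})$ and its closure: $\mathbf{C}_1^{-1/2}$ is an unbounded, densely-defined operator, so every manipulation of the form ``apply $\mathbf{C}_1^{-1/2}$ to normalize'' must be justified as acting on the range of $\mathbf{C}_1^{1/2}$ and extended by continuity where appropriate, and one must check that condition $(i)$ is precisely what makes these extensions bounded and the whole reduction legitimate. The other subtle point is ensuring the ``only if'' direction genuinely uses the dichotomy --- without it, failure of $(iii)$ only shows the Hellinger integral vanishes, and one needs the dichotomy (or an independent argument) to conclude mutual singularity rather than merely non-equivalence. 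Since the paper is content to cite \cite{DaPrato_Zabczyk_2014}, I would keep the exposition at the level of a guided tour through these steps rather than a fully self-contained proof.
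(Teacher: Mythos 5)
Your proposal is correct and matches the paper's treatment: the paper gives no argument of its own but simply cites Theorem 2.25 of Da Prato and Zabczyk, and your outline (reduction to the centered case, Kakutani's dichotomy for the covariance condition, Cameron--Martin for the mean shift, transitivity to combine them) is precisely the standard proof found in that reference. The cautions you raise about the unboundedness of $\mathbf{C}_1^{-1/2}$ and about needing the dichotomy for the ``only if'' direction are the right ones, so nothing is missing relative to what the paper relies on.
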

\begin{proof}
Refer to the proof of Theorem 2.25 of Da Prato and Zabczyk \cite{DaPrato_Zabczyk_2014}.\qed
\end{proof}

\begin{lemma}[The Radon-Nikodym Derivative]
Let $\mathbb{Q} = \mathcal{N}(\mathbf{m}_1,\mathbf{C}_1)$ and $\mathbb{P} = \mathcal{N}(\mathbf{m}_2,\mathbf{C}_2)$ be Gaussian measures on $\mathcal{H}$. If $\mathbb{P}$ and $\mathbb{Q}$ are equivalent and $\mathbf{C}_1 = \mathbf{C}_2 = \mathbf{C}$, then $\mathbb{P}$-a.s. the Radon-Nikodym derivative $\mathrm{d}\mathbb{Q}/\mathrm{d}\mathbb{P}$ is given by
\begin{equation}    
\resizebox{\textwidth}{!}{
$\frac{\mathrm{d} \mathbb{Q}}{\mathrm{d} \mathbb{P}}(\mathbf{f})=\exp \left[\left\langle \mathbf{C}^{-1 / 2}\left(\mathbf{m}_1-\mathbf{m}_2\right), \mathbf{C}^{-1 / 2}\left(\mathbf{f}-\mathbf{m}_2\right)\right\rangle-\frac{1}{2} \| \mathbf{C}^{-1 / 2} (\mathbf{m}_1-\mathbf{m}_2) \|^2\right] \forall \mathbf{f} \in \mathcal{H}.$
} 
\end{equation}
\label{lemma:radon}
\end{lemma}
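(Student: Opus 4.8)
The plan is to reduce to the centered case, diagonalize the common covariance, and obtain the stated density as a limit of finite-dimensional density ratios. Write $\mathbf{a} := \mathbf{m}_1 - \mathbf{m}_2$. Since $\mathbb{Q}$ and $\mathbb{P}$ are equivalent and $\mathbf{C}_1 = \mathbf{C}_2 = \mathbf{C}$, condition $(ii)$ of Lemma~\ref{lemma:measure_equi} forces $\mathbf{a} \in \mathbf{C}^{1/2}(\mathcal{H}) = \mathcal{H}_0$, so $\mathbf{h} := \mathbf{C}^{-1/2}\mathbf{a}$ is a well-defined element of $\overline{\mathcal{H}_0}$ with $\|\mathbf{h}\|^2 = \|\mathbf{C}^{-1/2}\mathbf{a}\|^2 < \infty$; this finiteness is the hinge of the whole argument. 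Translating both measures by $-\mathbf{m}_2$ reduces us to $\mathbb{P} = \mathcal{N}(\mathbf{0},\mathbf{C})$ and $\mathbb{Q} = \mathcal{N}(\mathbf{a},\mathbf{C})$, so it suffices to show $\mathrm{d}\mathbb{Q}/\mathrm{d}\mathbb{P}(\mathbf{f}) = \exp\!\bigl[\langle \mathbf{h}, \mathbf{C}^{-1/2}\mathbf{f}\rangle - \tfrac{1}{2}\|\mathbf{h}\|^2\bigr]$ and then undo the translation.

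Next I would fix an orthonormal eigenbasis $\{\mathbf{e}_k\}_{k\ge 1}$ of $\mathbf{C}$ with eigenvalues $\lambda_k > 0$, working on $\overline{\mathrm{Ran}(\mathbf{C})}$ which carries full mass under both measures, and introduce the normalized coordinates $\gamma_k(\mathbf{f}) := \lambda_k^{-1/2}\langle \mathbf{f}, \mathbf{e}_k\rangle$. Under $\mathbb{P}$ the $\gamma_k$ are i.i.d.\ standard normal, while under $\mathbb{Q}$ they are independent $\mathcal{N}(h_k, 1)$ with $h_k := \langle \mathbf{h}, \mathbf{e}_k\rangle$ and $\sum_k h_k^2 = \|\mathbf{h}\|^2 < \infty$. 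Let $\mathcal{F}_n := \sigma(\gamma_1,\dots,\gamma_n)$ and let $\mathbb{Q}_n,\mathbb{P}_n$ be the restrictions of $\mathbb{Q},\mathbb{P}$ to $\mathcal{F}_n$; comparing $n$-dimensional Gaussian densities gives
\begin{equation}
\rho_n(\mathbf{f}) := \frac{\mathrm{d}\mathbb{Q}_n}{\mathrm{d}\mathbb{P}_n}(\mathbf{f}) = \exp\Bigl(\sum_{k=1}^{n} h_k\,\gamma_k(\mathbf{f}) - \tfrac{1}{2}\sum_{k=1}^{n} h_k^2\Bigr).
\end{equation}
This $(\rho_n)$ is a nonnegative $\mathbb{P}$-martingale with mean $1$, and $\mathbb{E}_{\mathbb{P}}[\rho_n^2] = \exp\!\bigl(\sum_{k\le n} h_k^2\bigr) \le \exp(\|\mathbf{h}\|^2)$ is bounded, so it is uniformly integrable and converges $\mathbb{P}$-a.s.\ and in $L^1(\mathbb{P})$ to some $\rho_\infty \ge 0$.

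To finish I would identify $\rho_\infty$ and confirm it is the Radon--Nikodym derivative. Because $\sum_k h_k^2 < \infty$ and the $\gamma_k$ are i.i.d.\ $\mathcal{N}(0,1)$ under $\mathbb{P}$, the series $\sum_k h_k\gamma_k$ converges $\mathbb{P}$-a.s.\ and in $L^2(\mathbb{P})$; its sum is exactly the Paley--Wiener functional that the statement writes formally as $\langle \mathbf{C}^{-1/2}(\mathbf{m}_1-\mathbf{m}_2),\, \mathbf{C}^{-1/2}(\mathbf{f}-\mathbf{m}_2)\rangle$. Passing to the limit above yields $\rho_\infty(\mathbf{f}) = \exp\!\bigl[\langle \mathbf{C}^{-1/2}\mathbf{a}, \mathbf{C}^{-1/2}\mathbf{f}\rangle - \tfrac{1}{2}\|\mathbf{C}^{-1/2}\mathbf{a}\|^2\bigr]$. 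Finally, for $A\in\mathcal{F}_n$ the martingale identity gives $\mathbb{Q}(A) = \int_A \rho_n\,\mathrm{d}\mathbb{P} = \int_A \rho_\infty\,\mathrm{d}\mathbb{P}$; since $\bigcup_n \mathcal{F}_n$ generates $\mathcal{B}(\mathcal{H})$ modulo $\mathbb{P}$-null sets, a monotone-class argument extends this to all Borel sets, so $\rho_\infty = \mathrm{d}\mathbb{Q}/\mathrm{d}\mathbb{P}$ $\mathbb{P}$-a.s.; undoing the initial translation restores the claimed formula.

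The main obstacle is not any single computation but the \emph{meaning} of the pairing $\langle \mathbf{C}^{-1/2}(\mathbf{m}_1-\mathbf{m}_2), \mathbf{C}^{-1/2}(\mathbf{f}-\mathbf{m}_2)\rangle$: in infinite dimensions $\mathbf{C}^{-1/2}$ is unbounded and $\mathbf{f}-\mathbf{m}_2$ lies outside its domain for $\mathbb{P}$-almost every $\mathbf{f}$, so this expression must be read as the $L^2(\mathbb{P})$-limit of its finite-rank truncations rather than literally, and the Cameron--Martin membership $\mathbf{m}_1-\mathbf{m}_2 \in \mathbf{C}^{1/2}(\mathcal{H})$ is precisely what makes that limit exist and the martingale bound finite. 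For a fully rigorous treatment one may instead invoke Da Prato and Zabczyk~\cite{DaPrato_Zabczyk_2014} (Theorem~2.25 and its corollaries), of which this lemma is a direct consequence.
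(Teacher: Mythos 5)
Your proposal is correct, but it takes a genuinely different route from the paper: the paper gives no argument at all for this lemma, deferring entirely to Theorem~2.23 of Da Prato and Zabczyk~\cite{DaPrato_Zabczyk_2014}, whereas you reconstruct the standard Cameron--Martin proof from scratch. Your chain of steps is sound: reduction to the centered case, the observation that equivalence plus $\mathbf{C}_1=\mathbf{C}_2$ forces $\mathbf{m}_1-\mathbf{m}_2\in\mathbf{C}^{1/2}(\mathcal{H})$ via condition $(ii)$ of Lemma~\ref{lemma:measure_equi}, diagonalization in the eigenbasis of $\mathbf{C}$, the exact finite-dimensional likelihood ratio $\rho_n$, the $L^2(\mathbb{P})$ bound $\mathbb{E}_{\mathbb{P}}[\rho_n^2]\le\exp(\|\mathbf{h}\|^2)$ giving uniform integrability and a.s.\ convergence, and the monotone-class extension identifying $\rho_\infty$ as $\mathrm{d}\mathbb{Q}/\mathrm{d}\mathbb{P}$. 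What your version buys is precisely the point the paper's statement elides: the pairing $\langle\mathbf{C}^{-1/2}(\mathbf{m}_1-\mathbf{m}_2),\mathbf{C}^{-1/2}(\mathbf{f}-\mathbf{m}_2)\rangle$ cannot be read literally, since $\mathbf{f}-\mathbf{m}_2$ lies outside the domain of $\mathbf{C}^{-1/2}$ for $\mathbb{P}$-almost every $\mathbf{f}$, and must instead be understood as the $L^2(\mathbb{P})$-limit of its finite-rank truncations (the Paley--Wiener functional); you state this explicitly and show why the Cameron--Martin condition makes the limit exist. The only trivial discrepancy is bibliographic: the paper attributes this lemma to Theorem~2.23 of \cite{DaPrato_Zabczyk_2014} (reserving Theorem~2.25 for the Feldman--H\'ajek theorem of Lemma~\ref{lemma:measure_equi}), while you point to Theorem~2.25; this does not affect the mathematics.
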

\begin{proof}
Refer to the proof of Theorem 2.23 of Da Prato and Zabczyk \cite{DaPrato_Zabczyk_2014}.\qed
\end{proof}

Lemma~\ref{lemma:measure_equi} states the three conditions under which two Gaussian measures are equivalent. Lemma~\ref{lemma:radon} is the consequence of the Feldman-Hájek theorem, providing the Radon-Nikodym derivative formula when dealing with Gaussian measures on $\mathcal{H}$.

To train the diffusion model in functional space we have to minimize the upper bound of Proposition~\ref{proposition:objective}, which requires us to compute the KL divergence between the measures $\mathbb{Q}, \mathbb{P}$. In order to satisfy Lemma~\ref{lemma:measure_equi}, which will enable us to use Lemma~\ref{lemma:radon} to compute the KL divergence, we make the following assumption:
\begin{assumption}
Let $\mathbb{Q} = \mathcal{N}(\mathbf{\tilde{m}}_t(\mathbf{u}_t, \mathbf{u}_0), \Tilde{\beta}_t\mathbf{C})$ and $\mathbb{P}_\theta = \mathcal{N}(\mathbf{m}_\theta(\mathbf{u}_t, \mathbf{e}, t), \Tilde{\beta}_t\mathbf{C})$ be Gaussian measures on $\mathcal{H}$. With a conditional component $\mathbf{e}$, which can be an element of finite-dimensional space $\mathbb{R}^d$ or Hilbert space $\mathcal{H}$, there exists a parameter set $\theta$ such that the difference in mean elements of the two measures falls within the scaled covariance space: 
\label{assumption:measure_equi}
\begin{equation}
\mathbf{\tilde{m}}_t(\mathbf{u}_t, \mathbf{u}_0)-\mathbf{m}_\theta(\mathbf{u}_t, \mathbf{e}, t) \in (\Tilde{\beta}_t\mathbf{C})^{1/2}(\mathcal{H}).    
\end{equation}
\end{assumption}

By making this assumption we satisfy all three conditions of Lemma~\ref{lemma:measure_equi}:
$(i): \mathbf{C}^{1/2}_1(\mathcal{H}) = \mathbf{C}^{1/2}_2(\mathcal{H}) = (\Tilde{\beta}_t\mathbf{C})^{1/2}(\mathcal{H}) = \mathcal{H}_0$; 
$(ii):  \mathbf{m}_1 - \mathbf{m}_2 \in \mathcal{H}_0$  is directly satisfied from Assumption~\ref{assumption:measure_equi}; 
$(iii): (\mathbf{C}^{-1/2}_1\mathbf{C}^{1/2}_2)(\mathbf{C}^{-1/2}_1\mathbf{C}^{1/2}_2)^* -\mathbf{I} = \mathbf{I} - \mathbf{I}$ is the zero operator, which is trivially a Hilbert-Schmidt operator as its Hilbert-Schmidt norm is $0$.
As a consequence, $\mathbb{Q}$ and $\mathbb{P}$ are equivalent, allowing us to utilize the Radon-Nikodym derivative from Lemma~\ref{lemma:radon}.

\begin{theorem}[Conditional Diffusion Optimality in Function Space]\\
    Given the specified conditions in Assumption~\ref{assumption:measure_equi}, the minimization of the learning objective in Proposition~\ref{proposition:objective} is equivalent to obtaining the parameter set $\theta^*$ that is the solution to the problem
    {\small
    \begin{equation}
       \theta^* = \argmin_\theta \mathbb{E}_{\mathbf{u}_0\sim\mathbb{Q}_{\mathrm{data}}, t\sim[1, T]} \lambda_t\left\vert\left\vert \mathbf{C}^{-1/2}\left(\mathbf{A}\bm{\xi} - \bm{\xi}_\theta(\sqrt{\Bar{\alpha}_t}\mathbf{A}\mathbf{u}_0 + \sqrt{1-\Bar{\alpha}_t}\mathbf{A}\bm{\xi}, \mathbf{e}
       ,t)\right)\right\vert\right\vert_\mathcal{H}^2,
    \end{equation}}
where $\bm{\xi}\sim\mathcal{N}(\mathbf{0}, \mathbf{C})$, $\mathbf{A}: \mathcal{H}\rightarrow\mathcal{H}$ denotes a smoothing operator, $\mathbf{e} \in (\mathbb{R}^d \cup \mathcal{H})$ is a conditional component, $\bm{\xi}_\theta : \{1, 2, \dots, T\} \times (\mathbb{R}^d \cup \mathcal{H}) \times \mathcal{H} \rightarrow \mathcal{H}$ is a parameterized mapping, $\lambda_t = \beta^2_t/2\Tilde{\beta}_t(1-\beta_t)(1-\Bar{\alpha}_t) \in \mathbb{R}$ is a time-dependent constant.
\label{theorem:optimality}
\end{theorem}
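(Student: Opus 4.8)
The plan is to mirror the finite-dimensional DDPM reduction, but carry out the KL computations measure-theoretically on $\mathcal{H}$ using Lemmas~\ref{lemma:measure_equi} and~\ref{lemma:radon}. First, observe that $\mathcal{L}_T = \mathrm{KL}(\mathbb{Q}(\mathbf{u}_T\mid\mathbf{u}_0)\parallel\mathbb{P}_\theta(\mathbf{u}_T))$ does not involve $\theta$, since $\mathbb{P}_\theta(\mathbf{u}_T)$ is the fixed terminal Gaussian measure; hence it is discarded from the $\argmin$. The remaining work is to rewrite each $\mathcal{L}_{t-1} = \mathrm{KL}\big(\mathbb{Q}(\mathbf{u}_{t-1}\mid\mathbf{u}_t,\mathbf{u}_0)\parallel\mathbb{P}_\theta(\mathbf{u}_{t-1}\mid\mathbf{u}_t,\mathbf{e})\big)$ for $t=2,\dots,T$, together with $\mathcal{L}_0$, in the claimed $\bm{\xi}$-prediction form.

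For a fixed $t$, the two arguments of $\mathcal{L}_{t-1}$ are Gaussian measures on $\mathcal{H}$ with the \emph{same} covariance operator $\tilde{\beta}_t\mathbf{C}$ and with means $\tilde{\mathbf{m}}_t(\mathbf{u}_t,\mathbf{u}_0)$ and $\mathbf{m}_\theta(\mathbf{u}_t,\mathbf{e},t)$. Assumption~\ref{assumption:measure_equi} places their difference in the Cameron--Martin space $(\tilde{\beta}_t\mathbf{C})^{1/2}(\mathcal{H})$, so all three conditions of Lemma~\ref{lemma:measure_equi} hold and the measures are equivalent; Lemma~\ref{lemma:radon} then applies with $\mathbf{C}\mapsto\tilde{\beta}_t\mathbf{C}$. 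Writing $\mathrm{KL}(\mathbb{Q}\parallel\mathbb{P}_\theta)=\mathbb{E}_{\mathbf{f}\sim\mathbb{Q}}\big[\log(\mathrm{d}\mathbb{Q}/\mathrm{d}\mathbb{P}_\theta)(\mathbf{f})\big]$, substituting the Radon--Nikodym formula, and using $\mathbb{E}_{\mathbf{f}\sim\mathbb{Q}}[\mathbf{f}]=\tilde{\mathbf{m}}_t$ so that the linear term contributes $\|(\tilde{\beta}_t\mathbf{C})^{-1/2}(\tilde{\mathbf{m}}_t-\mathbf{m}_\theta)\|^2_\mathcal{H}$ and cancels half of the quadratic term, I obtain the infinite-dimensional Gaussian KL
\begin{equation}
\mathcal{L}_{t-1} = \frac{1}{2\tilde{\beta}_t}\big\|\mathbf{C}^{-1/2}\big(\tilde{\mathbf{m}}_t(\mathbf{u}_t,\mathbf{u}_0)-\mathbf{m}_\theta(\mathbf{u}_t,\mathbf{e},t)\big)\big\|^2_\mathcal{H}.
\end{equation}

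Next I would reparameterize the forward sample as $\mathbf{u}_t=\sqrt{\bar{\alpha}_t}\mathbf{A}\mathbf{u}_0+\sqrt{1-\bar{\alpha}_t}\mathbf{A}\bm{\xi}$ with $\bm{\xi}\sim\mathcal{N}(\mathbf{0},\mathbf{C})$, substitute $\mathbf{A}\mathbf{u}_0=\bar{\alpha}_t^{-1/2}(\mathbf{u}_t-\sqrt{1-\bar{\alpha}_t}\mathbf{A}\bm{\xi})$ into the closed form of $\tilde{\mathbf{m}}_t$ in Eq.~\eqref{eq:mu_}, and simplify the scalar coefficients (using $\sqrt{\bar{\alpha}_{t-1}/\bar{\alpha}_t}=(1-\beta_t)^{-1/2}$ and $\beta_t+(1-\beta_t)(1-\bar{\alpha}_{t-1})=1-\bar{\alpha}_t$) to collapse it to $\tilde{\mathbf{m}}_t=(1-\beta_t)^{-1/2}\big(\mathbf{u}_t-\tfrac{\beta_t}{\sqrt{1-\bar{\alpha}_t}}\mathbf{A}\bm{\xi}\big)$. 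Choosing the matching $\bm{\xi}$-prediction parameterization $\mathbf{m}_\theta(\mathbf{u}_t,\mathbf{e},t)=(1-\beta_t)^{-1/2}\big(\mathbf{u}_t-\tfrac{\beta_t}{\sqrt{1-\bar{\alpha}_t}}\bm{\xi}_\theta(\mathbf{u}_t,\mathbf{e},t)\big)$ gives $\tilde{\mathbf{m}}_t-\mathbf{m}_\theta=\tfrac{\beta_t}{\sqrt{1-\beta_t}\sqrt{1-\bar{\alpha}_t}}\big(\bm{\xi}_\theta(\mathbf{u}_t,\mathbf{e},t)-\mathbf{A}\bm{\xi}\big)$, and substituting back produces $\mathcal{L}_{t-1}=\lambda_t\big\|\mathbf{C}^{-1/2}\big(\mathbf{A}\bm{\xi}-\bm{\xi}_\theta(\mathbf{u}_t,\mathbf{e},t)\big)\big\|^2_\mathcal{H}$ with $\lambda_t=\beta_t^2/\big(2\tilde{\beta}_t(1-\beta_t)(1-\bar{\alpha}_t)\big)$, exactly the stated weight. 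The term $\mathcal{L}_0$ is handled identically (it is the $t=1$ instance of the same computation, or is absorbed into the sum as in DDPM). Finally, taking $\mathbb{E}_{\mathbf{u}_0\sim\mathbb{Q}_{\mathrm{data}}}$, replacing the sum over $t$ by uniform $t\sim[1,T]$, and discarding the $\theta$-free $\mathcal{L}_T$ converts minimizing the bound of Proposition~\ref{proposition:objective} into the displayed $\argmin$.

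The main obstacle is the second step: establishing that the \emph{measure-theoretic} KL divergence between two Gaussian measures on the infinite-dimensional $\mathcal{H}$ genuinely collapses to the Cameron--Martin-norm expression $\frac{1}{2\tilde{\beta}_t}\|\mathbf{C}^{-1/2}(\tilde{\mathbf{m}}_t-\mathbf{m}_\theta)\|^2_\mathcal{H}$. This relies on (i) equivalence of the two measures, so that $\mathrm{d}\mathbb{Q}/\mathrm{d}\mathbb{P}_\theta$ exists (guaranteed by Assumption~\ref{assumption:measure_equi} via Lemma~\ref{lemma:measure_equi}); (ii) the mean difference lying in the Cameron--Martin space, so that $\mathbf{C}^{-1/2}(\tilde{\mathbf{m}}_t-\mathbf{m}_\theta)\in\mathcal{H}$ — again precisely the content of the assumption; and (iii) $\mathbb{Q}$-integrability of the (generally unbounded) linear term in $\log(\mathrm{d}\mathbb{Q}/\mathrm{d}\mathbb{P}_\theta)$, i.e.\ a Fernique-type argument identifying it as a centred element of $L^2(\mathbb{Q})$ with zero mean under the shift, so that only the quadratic term survives the expectation; some care is also needed to keep the smoothing operator $\mathbf{A}$ inside $\mathbf{u}_t$, $\tilde{\mathbf{m}}_t$, and the noise term consistent. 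Everything downstream — the reparameterization and the constant bookkeeping yielding $\lambda_t$ — is routine and parallels the finite-dimensional case.
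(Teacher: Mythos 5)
Your proposal is correct and follows essentially the same route as the paper's proof: discard the $\theta$-free $\mathcal{L}_T$, invoke Assumption~\ref{assumption:measure_equi} and Lemmas~\ref{lemma:measure_equi}--\ref{lemma:radon} to reduce each $\mathcal{L}_{t-1}$ to $\frac{1}{2}\|(\tilde{\beta}_t\mathbf{C})^{-1/2}(\tilde{\mathbf{m}}_t-\mathbf{m}_\theta)\|^2_\mathcal{H}$, then apply the $\bm{\xi}$-prediction reparameterization to obtain the stated $\lambda_t$-weighted objective. The one step you flag as the main obstacle --- evaluating the $\mathbb{Q}$-expectation of the unbounded linear term in $\log(\mathrm{d}\mathbb{Q}/\mathrm{d}\mathbb{P}_\theta)$ --- is handled in the paper by an explicit spectral decomposition of $\mathbf{C}$ in an orthonormal eigenbasis, which yields exactly the cancellation you describe.
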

\begin{proof}
   Under Assumption~\ref{assumption:measure_equi}, we are now able to use the Radon-Nikodym derivative to compute the KL divergence: 
\begin{equation}
\resizebox{\textwidth}{!}{$
\begin{split} 
\mathrm{KL} \left[\mathbb{Q} \parallel \mathbb{P}\right] &= \int_\mathcal{H}\log\frac{\mathrm{d}\mathbb{Q}}{\mathrm{d}\mathbb{P}}(\mathbf{f})\mathrm{~d}\mathbb{Q}(\mathbf{f}) \\
 &= -\frac{1}{2} \| \mathbf{C}^{-1 / 2} (\mathbf{m}_1-\mathbf{m}_2) \|^2_\mathcal{H} + \int_\mathcal{H}\left\langle \mathbf{C}^{-1 / 2}\left(\mathbf{m}_1-\mathbf{m}_2\right), \mathbf{C}^{-1 / 2}\left(\mathbf{f}-\mathbf{m}_2\right)\right\rangle \mathrm{~d}\mathbb{Q}(\mathbf{f}).
 \end{split}
 $}
 \label{eq:kl_divergence}
 \end{equation}
We now use spectral decomposition to compute the integral term. Let $\{\lambda_j, \mathbf{e}_j\}_{j=1}^\infty$ be the eigenvalues and eigenvectors of $\mathbf{C}$. The eigenvector of $\mathbf{C}$ form an orthonormal basis for $\mathcal{H}$ by the spectral theorem, as $\mathbf{C}$ is a self-adjoint compact operator. Hence, the second integral is:
\begin{equation}
\begin{split}
& \int_{\mathcal{H}}\left\langle \mathbf{C}^{-1 / 2}\left(\mathbf{m}_1-\mathbf{m}_2\right), \mathbf{C}^{-1 / 2}\left(\mathbf{f}-\mathbf{m}_2\right) \mathrm{~d} \mathbb{Q}(f)\right. \\
& =\int_{\mathcal{H}} \sum_{j=1}^{\infty}\left\langle \mathbf{m}_1-\mathbf{m}_2, \mathbf{e}_j\right\rangle\left\langle \mathbf{f}-\mathbf{m}_2, \mathbf{e}_j\right\rangle \lambda_j^{-1} \mathrm{~d} \mathbb{Q}(f) \\
& =\sum_{j=1}^{\infty} \lambda_j^{-1}\left\langle \mathbf{m}_1-\mathbf{m}_2, \mathbf{e}_j\right\rangle \int_{\mathcal{H}}\left\langle \mathbf{f}-\mathbf{m}_2, \mathbf{e}_j\right\rangle \mathrm{~d} \mathbb{Q}(f) \\
& =\sum_{j=1}^{\infty} \lambda_j^{-1}\left\langle \mathbf{m}_1-\mathbf{m}_2, \mathbf{e}_j\right\rangle^2 \\
& =\left\langle \mathbf{C}^{-1 / 2}\left(\mathbf{m}_1-\mathbf{m}_2\right), \mathbf{C}^{-1 / 2}\left(\mathbf{m}_1-\mathbf{m}_2\right)\right\rangle .
\end{split}    
\label{eq:spectral_decompose}
\end{equation}
Combine Eq.~\ref{eq:kl_divergence} and Eq.~\ref{eq:spectral_decompose}, we obtain:
\begin{equation}
\mathrm{KL} \left[\mathbb{Q} \parallel \mathbb{P}\right] = \frac{1}{2} \| \mathbf{C}^{-1 / 2} (\mathbf{m}_1-\mathbf{m}_2) \|^2_\mathcal{H}
\end{equation}
From Proposition~\ref{proposition:objective}, the KL divergence between Gaussian measures $\mathbb{Q}$ and $\mathbb{P}$ now becomes:
\begin{equation}
\begin{split}
    L_{t-1} &= \mathrm{KL}\left[\mathbb{Q}(\mathbf{u}_{t-1} \vert \mathbf{u}_{t}, \mathbf{u}_0) \parallel \mathbb{P}_\theta(\mathbf{u}_{t-1} \vert\mathbf{u}_{t}, \mathbf{e})\right] \\
    &= \frac{1}{2} \| (\Tilde{\beta}_t\mathbf{C})^{-1 / 2} (\mathbf{\tilde{m}}_t (\mathbf{u}_t, \mathbf{u}_0)-\mathbf{m}_\theta(\mathbf{u}_t, \mathbf{e}, t)) \|^2_\mathcal{H}
\end{split}
\end{equation}
Our model must predict the mean function $\mathbf{\tilde{m}}_t (\mathbf{u}_t, \mathbf{u}_0)$. Recall that we got the expression of $\mathbf{\tilde{m}}_t (\mathbf{u}_t, \mathbf{u}_0)$ and $\mathbf{u}_0$ depending on $\mathbf{u}_t$:
\begin{equation}
\mathbf{\tilde{m}}_t (\mathbf{u}_t, \mathbf{u}_0) = \frac{\sqrt{\Bar{\alpha}_{t-1}}\beta_t}{1-\Bar{\alpha}_t} \mathbf{A}\mathbf{u}_0 + \frac{\sqrt{1-\beta_t}(1-\Bar{\alpha}_{t-1})}{1-\Bar{\alpha}_t}\mathbf{u}_t.
\end{equation}
\begin{equation}
    \mathbf{Au}_0 = \frac{1}{\sqrt{\Bar{\alpha}_t}} \left(\mathbf{u}_{t} - \sqrt{1 - \Bar{\alpha}_t}\mathbf{A}\bm{\xi}\right) \quad ;\text{where } \bm{\xi} \sim \mathcal{N}(\mathbf{0}, \mathbf{C})
\end{equation}
Combine these two expressions, we have:
\begin{equation}
\mathbf{\tilde{m}}_t (\mathbf{u}_t, \mathbf{u}_0) = \frac{1}{\sqrt{1-\beta_t}} \left(\mathbf{u}_t - \frac{\beta_t}{\sqrt{1-\Bar{\alpha}_t}}\mathbf{A}\bm{\xi}\right)   
\label{eq:mean_true}
\end{equation}
Thus, we parameterize the variational mean via:
\begin{equation}
    \mathbf{m}_\theta(\mathbf{u}_t, \mathbf{e}, t) =\frac{1}{\sqrt{1-\beta_t}} \left(\mathbf{u}_t - \frac{\beta_t}{\sqrt{1-\Bar{\alpha}_t}}\bm{\xi}_\theta(\mathbf{u}_t, \mathbf{e}, t)\right)
\label{eq:para_mean}
\end{equation}
Finally, plugging Eq.~\ref{eq:mean_true} and Eq.~\ref{eq:para_mean} into $L_{t-1}$, we obtain:
\begin{equation}
\resizebox{\textwidth}{!}{$
\begin{split}
    L_{t-1} &= \frac{1}{2} \left\| (\Tilde{\beta}_t\mathbf{C})^{-1 / 2} \left(\frac{1}{\sqrt{1-\beta_t}} \frac{\beta_t}{\sqrt{1-\Bar{\alpha}_t}}\mathbf{A}\bm{\xi} -\frac{1}{\sqrt{1-\beta_t}} \frac{\beta_t}{\sqrt{1-\Bar{\alpha}_t}}\bm{\xi}_\theta(\mathbf{u}_t, \mathbf{e}, t)\right) \right\|^2_\mathcal{H} \\
    &= \frac{\beta_t^2}{2\Tilde{\beta}_t(1-\beta_t)(1-\Bar{\alpha}_t)}\left\|\mathbf{C}^{-1 / 2}\left(\mathbf{A}\bm{\xi} -\bm{\xi}_\theta(\mathbf{u}_t, \mathbf{e}, t) \right)\right\|^2_\mathcal{H} \\
    & = \frac{\beta_t^2}{2\Tilde{\beta}_t(1-\beta_t)(1-\Bar{\alpha}_t)}\left\|\mathbf{C}^{-1 / 2}\left(\mathbf{A}\bm{\xi} -\bm{\xi}_\theta(\sqrt{\Bar{\alpha}_t}\mathbf{A}\mathbf{u}_0 + \sqrt{1-\Bar{\alpha}_t}\mathbf{A}\bm{\xi}, \mathbf{e}, t) \right)\right\|^2_\mathcal{H}
\end{split}
$}
\end{equation}
\qed
\end{proof}

\section{Experiments}
\label{appendix:experiments}
\subsection{Long-range dependencies}
We obtained the patch-based large-image model of Graikos et al. \cite{graikos2023learned} directly from the authors and tried to apply it to synthesize images larger than $1024\times 1024$ pixels. The overreliance of the patch-based model on the local descriptors (patch SSL embeddings) leads to the loss of large-scale structures and fails to capture long-range dependencies across the image. As a qualitative example (Figure~\ref{fig:long_range}), we get a reference image of size $2048 \times 2048$ pixels from TCGA-BRCA and extract embeddings in an attempt to generate a variation of it using our model and the patch-based model of \cite{graikos2023learned}. As illustrated, \texttt{$\infty$-Brush} retains large-scale structures (such as clearly-separated clusters of cells) that can span multiple patches, in comparison to the image generated from \cite{graikos2023learned}.

\begin{figure}[!t]
    \centering
    \includegraphics[width=\linewidth]{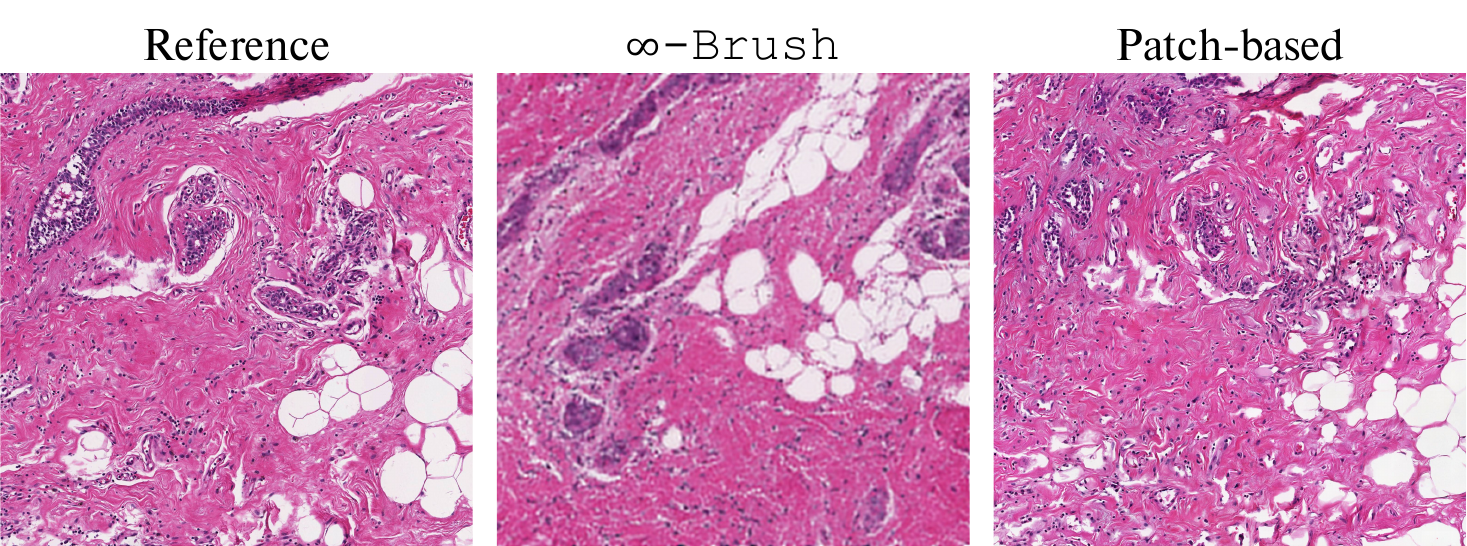}
    \caption{Long-range dependencies comparison between our \texttt{$\infty$-Brush} and patched-based method \cite{graikos2023learned}. \texttt{$\infty$-Brush} retains large-scale structures (such as clearly-separated clusters of cells) that can span multiple patches in comparison to the image generated from~\cite{graikos2023learned}.}
    \label{fig:long_range}
\end{figure}

\subsection{Zero-shot classification}
Following the experiment of \cite{graikos2023learned}, we generate images from a pre-defined set of four classes: benign tissue, in-situ, invasive carcinoma, and normal tissue. We use a VLM (Quilt) as a zero-shot classifier and compute the confusion matrix (CM). Figure~\ref{fig:cm_quilt} shows that \texttt{$\infty$-Brush} generates images semantically aligned with the text prompts.

\begin{figure}[!t]
    \centering
    \includegraphics[width=0.7\columnwidth]{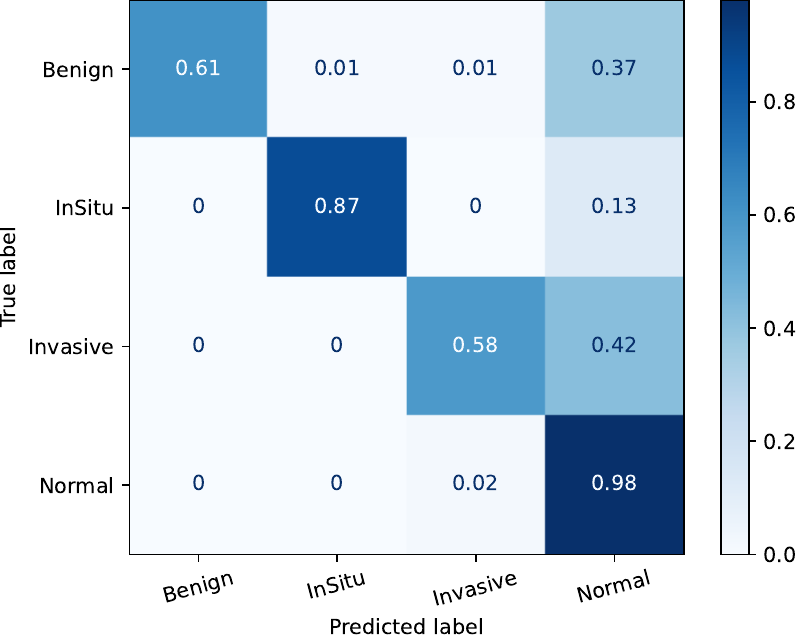}
    \caption{Confusion matrix of zero-shot classification of generated images.}
    \label{fig:cm_quilt}
\end{figure}

\subsection{Application of synthetic data on downstream task}
As a practical application, we double the number of training images of the BACH dataset by synthesizing images using real data embedding and evaluating the test set. Table~\ref{tab:bach_aug} shows a significant accuracy boost from these synthetic images.

\begin{table}[!t]
\caption{Synthetic data improves the accuracy significantly on the BACH test set.}
\centering
\resizebox{0.4\linewidth}{!}{
    \begin{tabular}{|c|c|}
    \hline
    Training Data       &  Test Acc         \\ 
    \hline
    Real             & 79 \%            \\ 
    Real + synthetic & \textbf{83 \%}  \\ 
    \hline
    \end{tabular}
}
\label{tab:bach_aug}
\end{table}

\subsection{Ablation study on $\%$ of pixels for training}

We compare our model when training on $256 ^* 256$ ($0.4\%$) \emph{vs.} $512 ^* 512$ pixels ($1.6\%$). Figure~\ref{fig:pixels_ablation} shows that training with more pixels improves performance. Our model efficiently uses $0.4\%$ of pixels compared to $25\%$ of $\infty$-Diff's due to the incorporation of coordinate embedding in CANO, functioning as positional embedding. 
\begin{figure}[t!]
    \centering
    \includegraphics[width=\columnwidth]{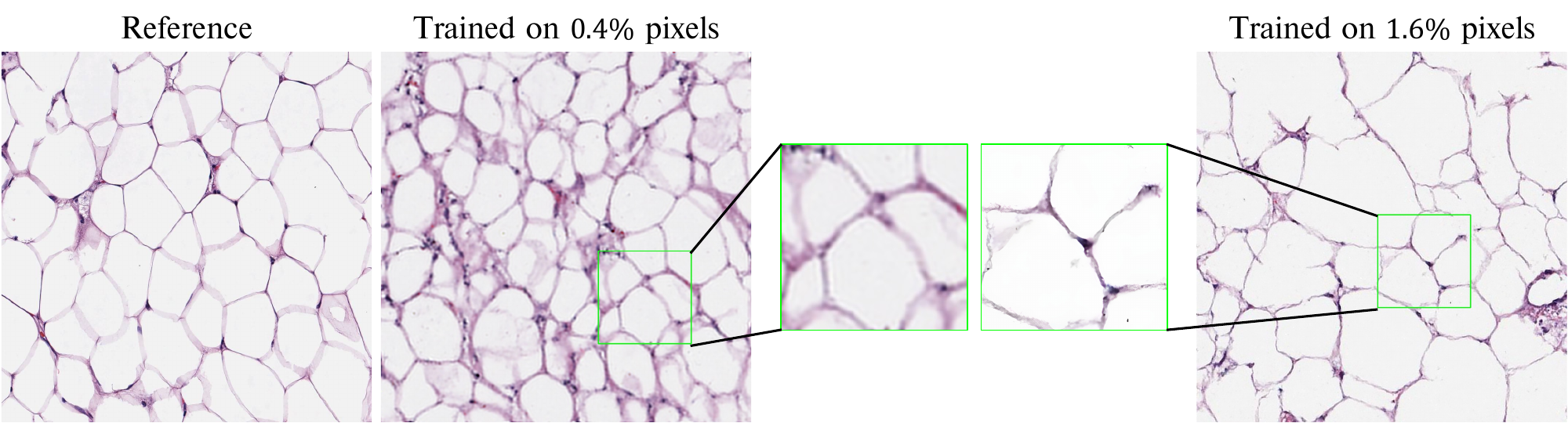}
    \caption{Ablation on $\%$ pixels for training and zoomed-in views.}
    \label{fig:pixels_ablation}
\end{figure}
\subsection{Qualitative results}
In Figure~\ref{fig:very_large_image} and Figure~\ref{fig:large_image}, we illustrate the generated very large ($4096 \times 4096$) and large ($1024 \times 1024$) images of TCGA-BRCA \cite{cancer2013cancer} dataset. We also show synthesized satellite images at $2048 \times 2048$ and $1024 \times 1024$ resolutions in Figure~\ref{fig:large_image_satellite}. Qualitative results show that given a single embedding vector of a downsampled $256\times256$ real image, \texttt{$\infty$-Brush} can synthesize images of arbitrary resolutions up to $4096 \times 4096$ and preserve global structures of the reference image. 

Figure~\ref{fig:failures} shows examples where the model did not successfully capture spatial structures and details from the reference images. This can be attributed to both the model and the conditioning used to represent the images.

\begin{figure}[!t]
    \centering
    \includegraphics[width=\linewidth]{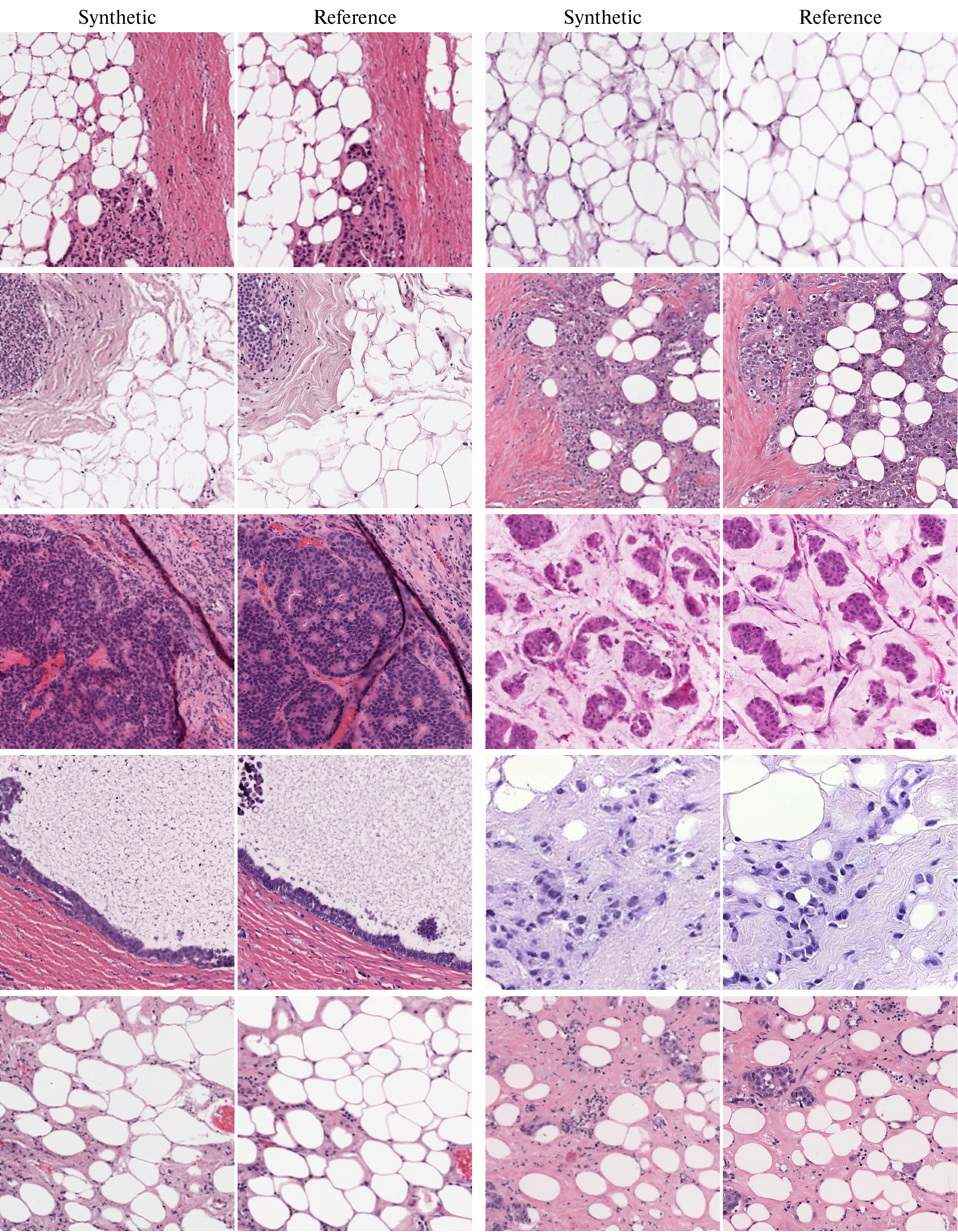}
    \caption{Very large ($4096 \times 4096$) images generated from \texttt{$\infty$-Brush}, and the corresponding reference real images used to generate them. Given a single embedding vector of a downsampled $256\times256$ real image, \texttt{$\infty$-Brush} can synthesize images of up to $4096 \times 4096$ and preserve global structures of the reference image.}
    \label{fig:very_large_image}
\end{figure}

\begin{figure}[!t]
    \centering
    \includegraphics[width=\linewidth]{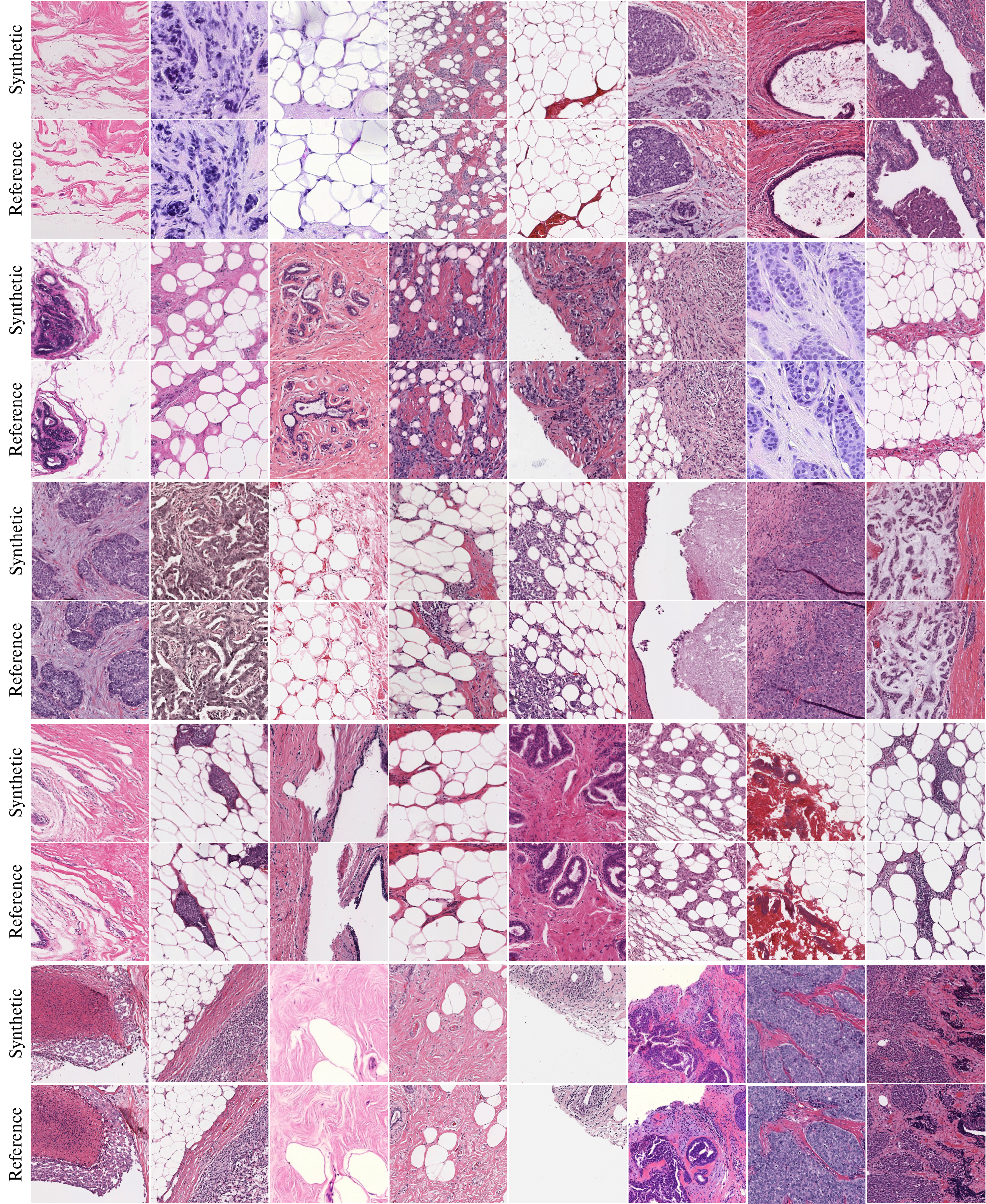}
    \caption{Large ($1024 \times 1024$) images generated from \texttt{$\infty$-Brush}, and the corresponding reference real images used to generate them. Given a single embedding vector of a downsampled $256\times256$ real image, \texttt{$\infty$-Brush} can synthesize images at arbitrary resolutions and preserve global structures of the reference image.}
    \label{fig:large_image}
\end{figure}

\begin{figure}[!t]
    \centering
    \includegraphics[width=\linewidth]{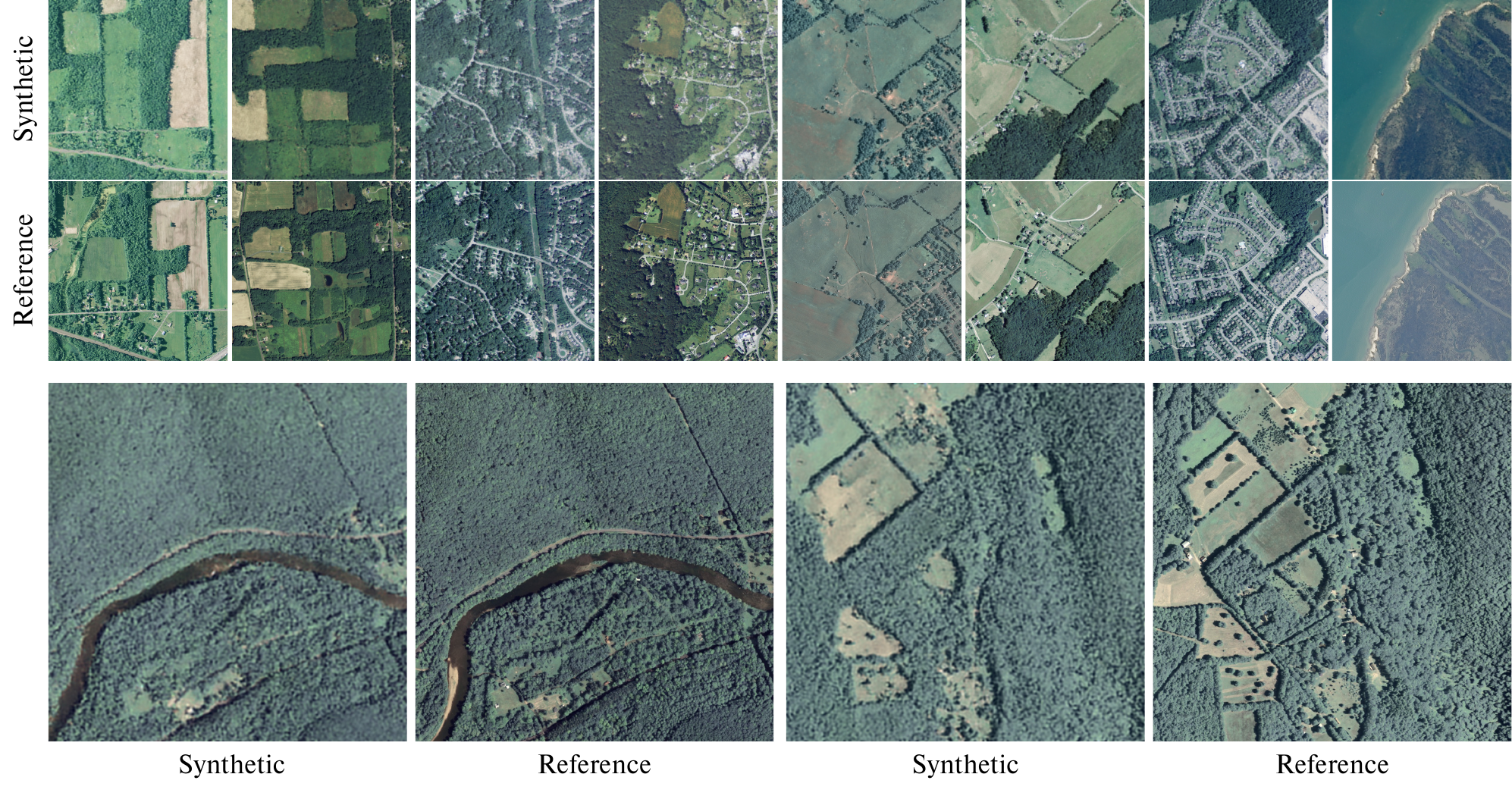}
    \caption{Satellite large ($1024 \times 1024$ and $2048 \times 2048$) images generated from \texttt{$\infty$-Brush}, and the corresponding reference real images used to generate them. Given a single embedding vector of a downsampled $256\times256$ real image, \texttt{$\infty$-Brush} can synthesize images at arbitrary resolutions and preserve global structures of the reference image.}
    \label{fig:large_image_satellite}
\end{figure}

\begin{figure}[!t]
    \centering
    \includegraphics[width=0.9\linewidth]{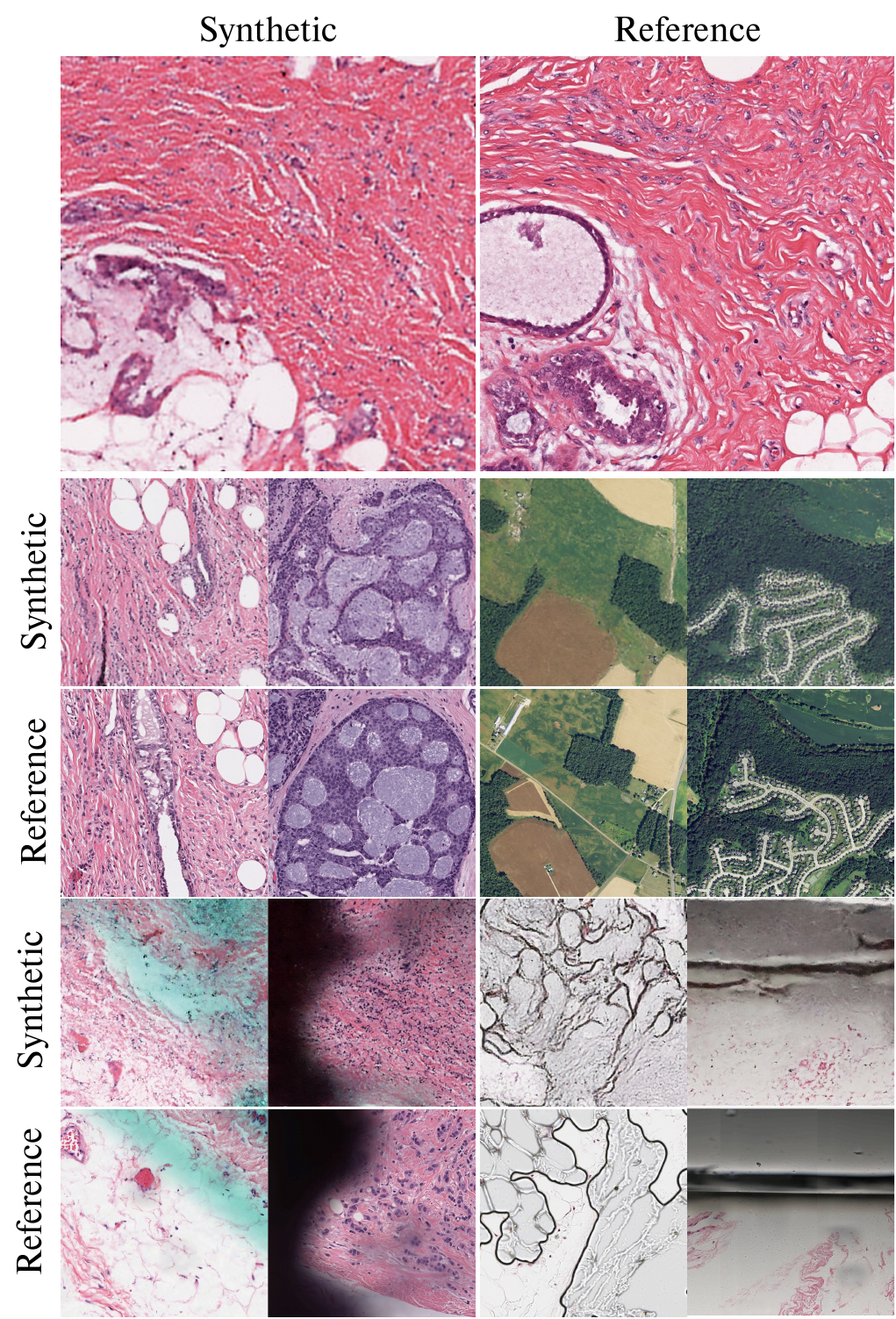}
    \caption{Uncurated ($4096 \times 4096$ and $2048 \times 2048$) images generated from \texttt{$\infty$-Brush}, and the corresponding reference real images used to generate them. Our model fails to capture spatial structure and details in specific regions of reference images (top 3 rows). In the last 2 rows, it shows that our model fails to controllably synthesize images due to bad conditioning information.}
    \label{fig:failures}
\end{figure}

\bibliographystyle{splncs04}
\bibliography{egbib}

\begin{thebibliography}{10}
\providecommand{\url}[1]{\texttt{#1}}
\providecommand{\urlprefix}{URL }
\providecommand{\doi}[1]{https://doi.org/#1}

\bibitem{bar-tal2023multidiffusion}
Bar{-}Tal, O., Yariv, L., Lipman, Y., Dekel, T.: Multidiffusion: Fusing diffusion paths for controlled image generation. In: Krause, A., Brunskill, E., Cho, K., Engelhardt, B., Sabato, S., Scarlett, J. (eds.) International Conference on Machine Learning, {ICML} 2023, 23-29 July 2023, Honolulu, Hawaii, {USA}. Proceedings of Machine Learning Research, vol.~202, pp. 1737--1752. {PMLR} (2023), \url{https://proceedings.mlr.press/v202/bar-tal23a.html}

\bibitem{bond2023infty}
Bond-Taylor, S., Willcocks, C.G.: $\infty$-diff: Infinite resolution diffusion with subsampled mollified states. arXiv preprint arXiv:2303.18242  (2023)

\bibitem{cancer2013cancer}
Cancer Genome Atlas Research~Network, J., et~al.: The cancer genome atlas pan-cancer analysis project. Nat. Genet  \textbf{45}(10),  1113--1120 (2013)

\bibitem{cao2021choose}
Cao, S.: Choose a transformer: Fourier or galerkin. In: Beygelzimer, A., Dauphin, Y., Liang, P., Vaughan, J.W. (eds.) Advances in Neural Information Processing Systems (2021), \url{https://openreview.net/forum?id=ssohLcmn4-r}

\bibitem{caron2021emerging}
Caron, M., Touvron, H., Misra, I., J{\'e}gou, H., Mairal, J., Bojanowski, P., Joulin, A.: Emerging properties in self-supervised vision transformers. In: Proceedings of the IEEE/CVF international conference on computer vision. pp. 9650--9660 (2021)

\bibitem{DaPrato_Zabczyk_2014}
Da~Prato, G., Zabczyk, J.: Stochastic Equations in Infinite Dimensions. Encyclopedia of Mathematics and its Applications, Cambridge University Press, 2 edn. (2014)

\bibitem{dao2024flashattention}
Dao, T.: Flashattention-2: Faster attention with better parallelism and work partitioning. In: The Twelfth International Conference on Learning Representations (2024), \url{https://openreview.net/forum?id=mZn2Xyh9Ec}

\bibitem{dhariwal2021diffusion}
Dhariwal, P., Nichol, A.: Diffusion models beat gans on image synthesis. Advances in neural information processing systems  \textbf{34},  8780--8794 (2021)

\bibitem{dosovitskiy2020image}
Dosovitskiy, A., Beyer, L., Kolesnikov, A., Weissenborn, D., Zhai, X., Unterthiner, T., Dehghani, M., Minderer, M., Heigold, G., Gelly, S., et~al.: An image is worth 16x16 words: Transformers for image recognition at scale. arXiv preprint arXiv:2010.11929  (2020)

\bibitem{graikos2023learned}
Graikos, A., Yellapragada, S., Le, M.Q., Kapse, S., Prasanna, P., Saltz, J., Samaras, D.: Learned representation-guided diffusion models for large-image generation. arXiv preprint arXiv:2312.07330  (2023)

\bibitem{gu2023matryoshka}
Gu, J., Zhai, S., Zhang, Y., Susskind, J.M., Jaitly, N.: Matryoshka diffusion models. In: The Twelfth International Conference on Learning Representations (2023)

\bibitem{hao2023gnot}
Hao, Z., Wang, Z., Su, H., Ying, C., Dong, Y., Liu, S., Cheng, Z., Song, J., Zhu, J.: Gnot: a general neural operator transformer for operator learning. In: Proceedings of the 40th International Conference on Machine Learning. ICML'23, JMLR.org (2023)

\bibitem{heusel2017gans}
Heusel, M., Ramsauer, H., Unterthiner, T., Nessler, B., Hochreiter, S.: Gans trained by a two time-scale update rule converge to a local nash equilibrium. Advances in neural information processing systems  \textbf{30} (2017)

\bibitem{ho2020denoising}
Ho, J., Jain, A., Abbeel, P.: Denoising diffusion probabilistic models. Advances in neural information processing systems  \textbf{33},  6840--6851 (2020)

\bibitem{ho2022classifier}
Ho, J., Salimans, T.: Classifier-free diffusion guidance. arXiv preprint arXiv:2207.12598  (2022)

\bibitem{hoogeboom2023blurring}
Hoogeboom, E., Salimans, T.: Blurring diffusion models. In: The Eleventh International Conference on Learning Representations (2023), \url{https://openreview.net/forum?id=OjDkC57x5sz}

\bibitem{ikezogwo2023quilt}
Ikezogwo, W.O., Seyfioglu, M.S., Ghezloo, F., Geva, D.S.C., Mohammed, F.S., Anand, P.K., Krishna, R., Shapiro, L.: Quilt-1m: One million image-text pairs for histopathology. arXiv preprint arXiv:2306.11207  (2023)

\bibitem{karras2018progressive}
Karras, T., Aila, T., Laine, S., Lehtinen, J.: Progressive growing of {GAN}s for improved quality, stability, and variation. In: International Conference on Learning Representations (2018), \url{https://openreview.net/forum?id=Hk99zCeAb}

\bibitem{pmlr-v206-kerrigan23a}
Kerrigan, G., Ley, J., Smyth, P.: Diffusion generative models in infinite dimensions. In: Ruiz, F., Dy, J., van~de Meent, J.W. (eds.) Proceedings of The 26th International Conference on Artificial Intelligence and Statistics. Proceedings of Machine Learning Research, vol.~206, pp. 9538--9563. PMLR (25--27 Apr 2023), \url{https://proceedings.mlr.press/v206/kerrigan23a.html}

\bibitem{kovachi2024neuraloperator}
Kovachki, N., Li, Z., Liu, B., Azizzadenesheli, K., Bhattacharya, K., Stuart, A., Anandkumar, A.: Neural operator: Learning maps between function spaces with applications to pdes. Journal of Machine Learning Research  \textbf{24}(89),  1--97 (2023), \url{http://jmlr.org/papers/v24/21-1524.html}

\bibitem{Le_Nguyen_Le_Do_Do_Tran_2024}
Le, M.Q., Nguyen, T.V., Le, T.N., Do, T.T., Do, M.N., Tran, M.T.: Maskdiff: Modeling mask distribution with diffusion probabilistic model for few-shot instance segmentation. Proceedings of the AAAI Conference on Artificial Intelligence  \textbf{38}(3),  2874--2881 (Mar 2024). \doi{10.1609/aaai.v38i3.28068}, \url{https://ojs.aaai.org/index.php/AAAI/article/view/28068}

\bibitem{lim2023scorebased}
Lim, J.H., Kovachki, N.B., Baptista, R., Beckham, C., Azizzadenesheli, K., Kossaifi, J., Voleti, V., Song, J., Kreis, K., Kautz, J., Pal, C., Vahdat, A., Anandkumar, A.: Score-based diffusion models in function space (2023)

\bibitem{liu2023audioldm}
Liu, H., Chen, Z., Yuan, Y., Mei, X., Liu, X., Mandic, D., Wang, W., Plumbley, M.D.: Audioldm: Text-to-audio generation with latent diffusion models. arXiv preprint arXiv:2301.12503  (2023)

\bibitem{nichol2021improved}
Nichol, A.Q., Dhariwal, P.: Improved denoising diffusion probabilistic models. In: International Conference on Machine Learning. pp. 8162--8171. PMLR (2021)

\bibitem{parmar2021cleanfid}
Parmar, G., Zhang, R., Zhu, J.Y.: On aliased resizing and surprising subtleties in gan evaluation. In: CVPR (2022)

\bibitem{podell2023sdxl}
Podell, D., English, Z., Lacey, K., Blattmann, A., Dockhorn, T., M{\"u}ller, J., Penna, J., Rombach, R.: Sdxl: Improving latent diffusion models for high-resolution image synthesis. arXiv preprint arXiv:2307.01952  (2023)

\bibitem{radford2021learning}
Radford, A., Kim, J.W., Hallacy, C., Ramesh, A., Goh, G., Agarwal, S., Sastry, G., Askell, A., Mishkin, P., Clark, J., et~al.: Learning transferable visual models from natural language supervision. In: International conference on machine learning. pp. 8748--8763. PMLR (2021)

\bibitem{rissanen2023generative}
Rissanen, S., Heinonen, M., Solin, A.: Generative modelling with inverse heat dissipation. In: The Eleventh International Conference on Learning Representations (2023), \url{https://openreview.net/forum?id=4PJUBT9f2Ol}

\bibitem{robinson2019large}
Robinson, C., Hou, L., Malkin, K., Soobitsky, R., Czawlytko, J., Dilkina, B., Jojic, N.: Large scale high-resolution land cover mapping with multi-resolution data. In: Proceedings of the IEEE Conference on Computer Vision and Pattern Recognition. pp. 12726--12735 (2019)

\bibitem{rombach2022high}
Rombach, R., Blattmann, A., Lorenz, D., Esser, P., Ommer, B.: High-resolution image synthesis with latent diffusion models. In: Proceedings of the IEEE/CVF conference on computer vision and pattern recognition. pp. 10684--10695 (2022)

\bibitem{schuhmann2022laion}
Schuhmann, C., Beaumont, R., Vencu, R., Gordon, C., Wightman, R., Cherti, M., Coombes, T., Katta, A., Mullis, C., Wortsman, M., et~al.: Laion-5b: An open large-scale dataset for training next generation image-text models. Advances in Neural Information Processing Systems  \textbf{35},  25278--25294 (2022)

\bibitem{pmlr-v37-sohl-dickstein15}
Sohl-Dickstein, J., Weiss, E., Maheswaranathan, N., Ganguli, S.: Deep unsupervised learning using nonequilibrium thermodynamics. In: Bach, F., Blei, D. (eds.) Proceedings of the 32nd International Conference on Machine Learning. Proceedings of Machine Learning Research, vol.~37, pp. 2256--2265. PMLR, Lille, France (07--09 Jul 2015), \url{https://proceedings.mlr.press/v37/sohl-dickstein15.html}

\bibitem{song2020denoising}
Song, J., Meng, C., Ermon, S.: Denoising diffusion implicit models. In: International Conference on Learning Representations (2020)

\bibitem{naip}
{USGS}: National agriculture imagery program {(NAIP)} (2023), \url{https://www.usgs.gov/centers/eros/science/usgs-eros-archive-aerial-photography-national-agriculture-imagery-program-naip}

\end{thebibliography}
\end{document}